\numberwithin{equation}{section}
\theoremstyle{plain}
\newtheorem{theorem}{Theorem}[section]
\newtheorem{lemma}[theorem]{Lemma}
\newtheorem{corollary}[theorem]{Corollary}
\theoremstyle{remark}
\newtheorem{definition}[theorem]{Definition}
\DeclareMathOperator*{\argmin}{arg\,min}
\DeclareMathOperator*{\diag}{diag}
\DeclareMathOperator*{\sign}{sign}
\title{Variable Selection and Regularization via Arbitrary Rectangle-range Generalized Elastic Net}
\author{Yujia Ding\footnote{Corresponding author.}\\ Institute of Mathematical Sciences\\
      Claremont Graduate University,
    California, USA\\
Email: yujia.ding@cgu.edu\\ \\
Qidi Peng\\ Institute of Mathematical Sciences\\
Claremont Graduate University, California, USA\\
Email: qidi.peng@cgu.edu\\\\
Zhengming Song\\ Institute of Mathematical Sciences\\
Claremont Graduate University, California, USA\\
Email: zhengming.song@cgu.edu\\ \\
Hansen Chen\\ Old Mission Capital\\
      Chicago, Illinois, USA.\\
Email: hansenchen1@gmail.com}
\date{}
\begin{document}
\maketitle

\begin{abstract}
We introduce the arbitrary rectangle-range generalized elastic net penalty method, abbreviated to ARGEN, for performing constrained variable selection and regularization in high-dimensional sparse linear models. As a natural extension of the nonnegative elastic net penalty method, ARGEN is proved to have variable selection consistency and  estimation consistency under some conditions.  The asymptotic behavior in distribution of the ARGEN estimators have been studied. We also propose an algorithm called MU-QP-RR-W-$l_1$ to efficiently solve ARGEN. By conducting simulation study we show that ARGEN outperforms the elastic net in a number of settings. Finally an application of S\&P 500 index tracking with constraints on the stock allocations is performed to provide general guidance for adapting ARGEN to solve real-world problems.
\end{abstract}

\textbf{MSC2020 subject classifications:}
Primary 62J07, 62F12; secondary 62P05

\textbf{Keywords:}
Rectangle-range elastic net,
High-dimensional sparse data,
Quadratic programming,
Multiplicative  updates,
Index tracking


\section{Introduction}

Variable selection and regularization are essential tools in high-dimensional data analysis. Many existing strategies are able to achieve both high prediction accuracy and interpretability. For instance,  the lasso \citep{tibshirani} was popularized thanks to its computational efficiency \citep{efron1}, variable selection consistency \citep{zhao}, and estimation consistency \citep{negahhan}. We refer to \cite{zou,bickel,efron2,lounici,yuan,rocha,wang} for more in-depth discussions of lasso. Later, elastic net \cite{zouhas} is proposed through linearly combining the lasso and ridge regression-like penalties. As a more flexible model, elastic net is shown to be able to  outperform the lasso for high-dimensional data. 

Consider the following linear model
\begin{equation}
    \label{example_model}
    Y=\beta_0+\beta_1X_1+\ldots+\beta_pX_p+g(\beta_1,\ldots,\beta_p)+\epsilon,
\end{equation}
where $Y\in\mathbb R$ is the response variable, $X_1,\ldots,X_p\in\mathbb R$ are $p$ predictors, $g$ is some penalty function and $\epsilon$ is the residual. In the setting of ordinary linear models,  one considers no range constraint on the coefficients, i.e., it is assumed that  $\beta_1,\ldots,\beta_p\in\mathbb R$. However in practice $\beta_1,\ldots,\beta_p$ are often restricted to a prior range of values. For example, in portfolio management problem, the coefficients are considered as allocations of assets in a fund, which are valued in $[0,1]$; in academic grading problem, the coefficients are interpreted as weights of a list of courses, which are also ranged in $[0,1]$. Such constraints may influence the behavior of the penalty $g$,  as well as the estimated values of $\beta_1,\ldots,\beta_p$. Concerning adapting to this real world constraint, Wu et al. \cite{wu_nl} and Wu and Yang \cite{wu_nen} introduced nonnegative lasso and nonnegative elastic net approaches, which are successfully applied to solve the real world index tracking problem without short sales (this corresponds to the nonnegative-value constraint on weights).  There exist more such range constraints on the regression coefficients in the real world problems, therefore  more flexible models are needed to address problems that require arbitrary-range constraints on the regression coefficients. With this motivation, our first goal is to suggest a novel method that concerns arbitrary rectangle-range constraint on the regression coefficients. Secondly, as there exist necessary conditions for the variable selection to be consistent,  certain scenarios exist where the lasso or elastic net is inconsistent for variable selection. Recall that Zou \cite{zou}, Mouret et al. \cite{mouret2013generalized} and Sokolov et al. \cite{sokolov}  introduced methods that generalize the lasso and elastic net respectively by placing adaptive weights on the predictors and penalties. Therefore we will adopt this setting in our model, i.e., the coefficients in the penalties will be weighted. As conclusion, our paper proposes a method that allows arbitrary rectangle-range constraints on the regression coefficients of a generalized elastic net and provide rigorous theoretical results to support the consistency of the method. To elaborate, the proposed  \textit{arbitrary rectangle-range generalized elastic net method} (abbreviated to ARGEN), is a regularization method that deals with high-dimensional problems, and generalizes the nonnegative elastic net. Compared with nonnegative elastic net, ARGEN allows adding arbitrarily lower and upper constraints on the coefficients, and  considers effects of individual and interactive penalty weights. The former setting ensures ARGEN more adaptability to real world constraints, and the latter setting promises better performance due to its larger parameter searching space. Variable selection consistency and estimation consistency are then derived. To solve ARGEN, we introduce a novel algorithm \textit{multiplicative updates for solving  quadratic programming with rectangle range and weighted $l_1$ regularizer} (abbreviated to MU-QP-RR-W-$l_1$).  We summarize the main contributions of our paper as follows:
\begin{enumerate}
    \item We introduce ARGEN, a method of solving variable selection and regularization problems that require the regression coefficients to be ranged in some rectangle in $\mathbb R^p$ (see (\ref{model})). As a flexible approach, ARGEN includes the nonnegative elastic net and a number of new extensions of the models lasso, ridge, and elastic net.
    \item Subject to some condition on the inputs, the variable selection consistency, the estimation consistency, and the limiting distribution of the estimator of ARGEN are obtained. We refer to Theorems \ref{thm:VSC}, \ref{thm:estimation_consistency}, and \ref{thm:convergence_law}.
    \item A novel algorithm MU-QP-RR-W-$l_1$ is introduced to solve the general quadratic programming problem 
$\min_{v\in[0,  l]} F(v)$ $= \frac{1}{2} v'Av + b' v+d'|v-v^0|$, following the notations in (\ref{4}). The algorithm is implemented as a Python library through the PyPi server\footnote{\url{https://pypi.org/project/generalized-elastic-net}} and is publicly shared. 
\item We show a successful real world application of the ARGEN approach in the S\&P 500 index tracking problem. Readers can get full access to the Python script in the Github repository\footnote{\url{https://github.com/songzhm/arbitraryElasticNet}}.
\end{enumerate} 

Throughout the paper, we denote the transpose of a matrix by $(\bullet)'$, the $i$-th column of a matrix by $(\bullet)_i$, the entry in the $i$-th row and $j$-th column of a matrix by $(\bullet)_{ij}$, the diagonal matrix with diagonal vector $\mathrm{\textbf{x}}$ by $\diag(\mathrm{\textbf{x}})$, and the maximum (resp. minimum) element of a vector by $\max(\bullet)$ (resp. $\min(\bullet)$). Besides, an $n\times n$ matrix $X$ can be expressed by $ X=(X_{ij})_{1\le i,j\le n}$. The elementwise absolute value of a vector or matrix is $|\bullet|$; for $\mathrm{\textbf x}=(x_1, \ldots, x_p)$,
$|\mathrm{\textbf x}|:=(|x_1|,\ldots,|x_p|)$ and for an $n\times n$ matrix $X$, $|X|:=(|X_{ij}|)_{1\le i,j\le n}$. Moreover, let $\mathrm{\textbf x} =(x_1,\ldots,x_p)$, $\mathrm{\textbf y}=(y_1,\ldots,y_p)$ be two equal-length vectors, we denote the $p$-dimensional interval by $[\mathrm{\textbf x}, \mathrm{\textbf y}]:=[x_1,y_1]\times\ldots\times[x_p,y_p]$.

In the sequel, we consider the linear regression model
\begin{equation}
\label{1}
Y = X\beta^* + \epsilon,
\end{equation}
where $X$ is a deterministic $ n \times p $ design matrix, $Y = (y_1~\ldots~y_n)'$ is an $n\times1$ response vector and $\varepsilon = ( \varepsilon_1~\ldots~\varepsilon_n)' $ is a  Gaussian noise with marginal variance $\sigma^2 $. Without loss of generality, we assume all the $p$ predictors are real-valued and centered, so the
intercept can be ignored.  $\beta^* \in \mathbb R^{p}$ is the regression coefficients.

The rest of the paper is organized as follows. In Section \ref{sec::argen}, we discuss of the analytical features of ARGEN and discuss of its variable selection consistency (Theorem \ref{thm:VSC}), estimation consistency (Theorem \ref{thm:estimation_consistency}) and estimator's limiting distribution (Theorem \ref{thm:convergence_law}). In Section \ref{sec:numerical_scheme}, we propose an efficient algorithm MU-QP-RR-W-$l_1$ for solving ARGEN. Approaches we use to speed up hyper-parameter optimization are discussed in Section \ref{Hyper_parameter_tuning}. Simulations that compare the performances of various methods are conducted in Section \ref{sec:simulation}. Section \ref{sec:real_world} shows an application of ARGEN to the real world S\&P 500 index tracking problem. Section \ref{sec:conclusion} is devoted to the conclusion and discussion of future research. Technical proofs are moved to Appendix.

\section{The ARGEN}
\label{sec::argen}
\subsection{Definition}
In practice it is often natural to assume sparsity in the high-dimensional dataset problem. Therefore in the sequel we assume that the linear model (\ref{1}) is $q$-sparse, i.e., $\beta^*$ has at most $ q ~(q \ll p) $ nonzero elements. We intend to cope with the case when there is a control on the range of the coefficients, that is, let $ s =(s_1,\ldots,s_p)$, $ t=(t_1,\ldots,t_p)$ with $s_i\in \mathbb R\cup\{-\infty\},~t_i\in \mathbb R\cup\{+\infty\}$, $s_i< t_i$ for all $i=1,\ldots,p$, the optimal coefficients are in a $p$-dimensional rectangle
$\mathcal I :=[s, t]\subset \mathbb R^p$. To capture the penalty weights
for individual features, we introduced  $\mathrm{w}_{n} = (\mathrm{w}_{n,1}~ \ldots~ \mathrm w_{n,p})'$ as weights for each coefficient in the $l_1$ penalty, and it satisfies $\mathrm w_{n,i}\geq0, i=1,\cdots,p$. In addition to individual features, $\Sigma_n$, a positive semi-definite matrix, is introduced to represent the penalty weights for interactions between any two features. Consider the linear model (\ref{1}) and let $ \beta=(\beta_1~\ldots~\beta_p)'$ be a vector in $\mathbb R^p$. The  ARGEN  estimator of $\beta$ is given by 
\begin{equation}
\label{model}
\widehat{\beta}(\lambda_n^{(1)},\lambda_n^{(2)},\mathrm w_n, \Sigma_n) =\argmin_{\beta \in\mathcal I}\left( \left\| Y - X\beta \right\|_2^2+\lambda_n^{(1)}  \mathrm w_n'|\beta|+\lambda_n^{(2)} \beta'\Sigma_n \beta\right).
\end{equation}
Here $\lambda_n^{(1)},\lambda_n^{(2)}\ge0$ are the  tuning parameters which control the importance of the $l_1$ and $l_2$ regularization terms, respectively.

The ARGEN (\ref{model}) naturally extends the elastic net method. That is, it becomes the elastic net when $\mathcal I=\mathbb R^p$, $\mathrm w_n=(1~ \ldots~ 1)'$, and $\Sigma_n$ is the identity matrix. Thus ARGEN extends  the
lasso and ridge methods by further assigning $\lambda_n^{(2)}=0$ and $\lambda_n^{(1)}=0$ respectively. In addition, ARGEN becomes the nonnegative elastic net if we replace $\mathcal I=\mathbb R^p$  with $\mathcal I=\mathbb R_+^p:=[0,+\infty)^p$ in the setting of elastic net.

\subsection{Variable Selection Consistency}
\label{sec:variable_selection}
We define the variable selection consistency for the ARGEN as follows. For $i=1,\ldots,p$, we decompose the interval $[s_i,t_i]$ with $s_i<t_i$ into seven disjoint sub-intervals:
\begin{equation*}
    [s_i,t_i]=\bigcup_{k=2}^6\mathcal G_i^{(k)}\bigcup\mathcal G_i^{(1-)}\bigcup\mathcal G_i^{(1+)},
\end{equation*}
where
\begin{eqnarray*}
&&\mathcal G_i^{(1-)}=(s_i, t_i)\cap(-\infty,0),~~~~~\mathcal G_i^{(1+)}=(s_i, t_i)\cap(0,+\infty),\nonumber\\
&&\mathcal G_i^{(2)}=\{s_i\}\backslash\{0\},~~~~~\mathcal G_i^{(3)}=\{t_i\}\backslash\{0\},\nonumber\\
&&\mathcal G_i^{(4)}=\{s_i\}\cap\{0\},~~~~~\mathcal G_i^{(5)}=\{t_i\}\cap\{0\},~~~~~\mathcal G_i^{(6)}=(s_i,t_i)\cap\{0\}.
\end{eqnarray*}
In addition, we define 
$
\mathcal G_i^{(1)}=\mathcal G_i^{(1-)}\cup\mathcal G_i^{(1+)}
$ for simplicity.
Correspondingly, each coefficient in $\beta^*$ belongs to one of the seven groups of values; i.e., for each $i=1,\ldots,p$, there is a unique $k_i\in\{1-,1+,2,\ldots,6\}$ such that 
$
\beta_i^*\in \mathcal G_i^{(k_i)}.
$
Now for $j\in\{1-,1+,2,\ldots,6\}$, denote by 
\begin{equation*}
    S_{(j)}=\left\{i\in\{1,\ldots,p\}:~\beta_i^*\in \mathcal G_{i}^{(j)}\right\},
\end{equation*}
the set of indexes $i$ for which $\beta_i^*$ belongs to the $j$-th group of values, and let $\# S_{(j)}$ be the cardinality of the set. Correspondingly, we can define
\begin{equation*}
    \widehat{S}_{(j)}(\lambda_n^{(1)},\lambda_n^{(2)},\mathrm w_n,\mathrm\Sigma_n)=\left\{i\in\{1,\ldots,p\}:~\widehat\beta_i\in \mathcal G_{i}^{(j)}\right\}.
\end{equation*}
\begin{definition}
  	\label{def:VSC}
  	 ARGEN (\ref{model}) is said to have variable selection consistency if there exist $\lambda_n^{(1)}$, $\lambda_n^{(2)}$, $\mathrm w_n$, and $\Sigma_n$ such that
  	\begin{equation}
  	\label{VSC}
  		\mathbb P\left(\widehat{S}_{(j)}\big(\lambda_n^{(1)},\lambda_n^{(2)},\mathrm w_n,\mathrm\Sigma_n\big) = S_{(j)}\Big|S_{(j)}\ne\emptyset\right) \xrightarrow[n\to\infty]{}1~\mbox{for $j\in\{1-,1+,2,\ldots,6\}$}.
  	\end{equation}
\end{definition}
(\ref{VSC}) implies that, starting from some $n$, it is of big opportunity that $\widehat\beta_i$ equals $\beta_i^*$ if $\beta^*_i\in\{0,s_i,t_i\}$. Such property includes the variable selection consistency of the nonnegative elastic net and  elastic net as particular cases. Therefore our definition of the variable selection consistency for ARGEN is in a broader sense than that for the \enquote{free-range} or nonnegative elastic net  \citep{zhao, wu_nl,wu_nen}.  

Let $X_{(1)}=(X_{(1-)}, X_{(1+)})$ and for $j\in\{1-,1+,2,\ldots,6\}$, let $X_{(j)} = \left(X_i\right)_{i\in S(j)}$  be the observed  predictor values corresponding to the $j$th group of indexes. Similarly, let $\beta^*_{(j)} = \left(\beta^*_i\right)_{i\in S(j)}$, $s_{(j)} = \left(s_i\right)_{i\in S(j)}$, $t_{(j)} = \left(t_i\right)_{i\in S(j)}$, $\mathrm w_{n, (j)} = \left(\mathrm w_{n, i}\right)_{i\in S(j)}$, and $\Sigma_{n, (j_1j_2)} = \left(\Sigma_{n, i_1,i_2}\right)_{i_1\in S(j_1), i_2\in S(j_2)}$. Moreover, let $C$ be
\begin{equation}
\label{def:C}
  	C :=\begin{pmatrix}
  	C_{ij} 
  	\end{pmatrix}_{1\le i,j\le 6}=\frac{1}{n}X'X=\begin{pmatrix}
  	\frac{1}{n}X'_{(i)} X_{(j)}
  	\end{pmatrix}_{1\le i,j\le6}
\end{equation}
and $\Lambda_{\min}(C_{11})$ be the minimal eigenvalue of $C_{11}$. Denote by
\begin{equation}
\label{rho_C}
\begin{aligned}
    &\rho_n^{(1)}:=\max\bigg\{\Big(C_{11}+\frac{\lambda_n^{(2)}}{n}\Sigma_{n,(11)}\Big)^{-1}C_{11}\beta_{(1)}^*- t_{(1)}\bigg\},\\
    &\rho_n^{(2)}:=\min\bigg\{\Big(C_{11}+\frac{\lambda_n^{(2)}}{n}\Sigma_{n,(11)}\Big)^{-1}C_{11}\beta_{(1)}^*- s_{(1)}\bigg\},\\
    &C_n := \Big(C_{11}+\frac{\lambda_n^{(2)}}{n}\Sigma_{n,(11)}\Big)^{-1}\Big(\frac{\lambda_n^{(1)}}{2}
\diag(\sign(\beta_{(1)}^*))\mathrm w_{n,(1)}+\lambda_n^{(2)}(\Sigma_{n,(12)} s_{(2)}+\Sigma_{n,(13)} t_{(3)})\Big),\\
    &C_{n}^{\max}:=\max C_n,~~~C_{n}^{\min}:=\min C_n,
\end{aligned}
\end{equation}
where for a vector $v=(v_1,\ldots,v_n)$, $\sign (v):=(\sign(v_1),\ldots,\sign(v_n))$ denotes the vector of signs of the elements in $v$, and $\diag(v)$ denotes the diagonal matrix with diagonal elements $v$.  
To show ARGEN admits the variable selection consistency (\ref{VSC}), we  assume that the following conditions hold: 
\begin{eqnarray}
    \label{con_0}
    &&q > 1,~~~~~p-q > 1,\\
    \label{con_1}
    &&\frac{\lambda_n^{(1)}}{\sqrt{n}}\xrightarrow[n\to\infty]{}+\infty,\\
    \label{con_4}
    &&\frac{1}{n}\max\limits_{1\le i\le p}X_i'X_i\xrightarrow[n\to\infty]{}0,\\
    \label{con_5}
    &&\max_{j\in\{2,\ldots,6\}}\Sigma_{n,(11)}^{-1}\Sigma_{n,(i1)}=\mathcal{O}(1),~\mbox{as}~n\to\infty,
\end{eqnarray}
and for $j\in\{1-,1+\}$,
\begin{eqnarray}
    \label{con_2}
   &&\frac{1}{\rho_n^{(1)}}\left(\frac{8\sigma\sqrt{\#S_{(1)}\operatorname{trace(C_{11})}\log (\#S_{(j)})}}{n\Lambda_{\min} (C_{11}+\lambda_n^{(2)}\Sigma_{n,(11)}/n)}+\frac{|C_n^{\min}|}{n}\right)\xrightarrow[n\to\infty]{}0,\\
   \label{con_3}
    &&\frac{1}{\rho_n^{(2)}}\left(\frac{8\sigma\sqrt{\#S_{(1)}\operatorname{trace(C_{11})}\log (\#S_{(j)})}}{n\Lambda_{\min} (C_{11}+\lambda_n^{(2)}\Sigma_{n,(11)}/n)}+\frac{|C_n^{\max}|}{n}\right)\xrightarrow[n\to\infty]{}0.
\end{eqnarray}
Besides, we assume that the \textit{arbitrary rectangle-range elastic irrepresentable condition}  (AREIC), defined below, is satisfied.
   \begin{definition}
   	\label{AREIC}
   	The AREIC is given as: For $j=2,\ldots,6$ satisfying $S_{(j)}\ne\emptyset$, there exists a positive constant vector $\eta_{(j)}$, such that
	\begin{equation}
\begin{aligned}
	\label{AREIC_ineq}
	&\Big(C_{j1}+\frac{\lambda_n^{(2)}}{n}\Sigma_{n,(j1)}\Big)\Big(C_{11}+\frac{\lambda_n^{(2)}}{n}\Sigma_{n,(11)}\Big)^{-1}\Big(\diag(\sign(\beta_{(1)}^*))\mathrm w_{n,(1)}\\
&\hspace{1.5cm}+\frac{2\lambda_n^{(2)}}{\lambda_n^{(1)}}\Sigma_{n,(11)}\beta_{(1)}^*+\frac{2\lambda_n^{(2)}}{\lambda_n^{(1)}}\left(\Sigma_{n,(12)} s_{(2)}+\Sigma_{n,(13)} t_{(3)}\right)\Big)\\
&\hspace{1.5cm}-\frac{2\lambda_n^{(2)}}{\lambda_n^{(1)}}\Sigma_{n,(j1)}\beta_{(1)}^*-\frac{2\lambda_n^{(2)}}{\lambda_n^{(1)}}\left(\Sigma_{n,(j2)} s_{(2)}+\Sigma_{n,(j3)} t_{(3)}\right)
\left\{
\begin{array}{ll}
     \le D_{(j)}\mathrm w_{n,(j)}-\eta_{(j)},& ~\mbox{if}~j=2,4; \\
      \ge D_{(j)}\mathrm w_{n,(j)}+\eta_{(j)},& ~\mbox{if}~j=3,5; \\
      \le \mathrm w_{n,(6)}-\eta_{(6)} ,& ~\mbox{if}~j=6,
\end{array}\right.
	\end{aligned}
\end{equation}
	where 
	$
	    (D_{(2)}~D_{(3)}~D_{(4)}~D_{(5)})=(\diag(\sign( s_{(2)}))~\diag(\sign( t_{(3)}))~1~-1).
	$
\end{definition}

When $s=0$, $t=+\infty$, $\mathrm w_n=1$ and $\Sigma_{n}$ is the identity matrix, the AREIC becomes the \textit{nonnegative elastic irrepresentable condition} (NEIC) as follows:
\begin{equation}
\label{NEIC}
C_{61}\Big(C_{11}+\frac{\lambda_n^{(2)}}{n}\Big)^{-1}\Big(\mathbf 1+\frac{2\lambda_n^{(2)}}{\lambda_n^{(1)}}\beta_{(1)}^*\Big)\le\mathbf 1 - \eta_{(6)},
\end{equation}
which was necessary to yield the variable selection consistency of nonnegative elastic net \cite{Zhao2014}. If, in addition to (\ref{NEIC}), $\lambda_n^{(2)}=0$, the NEIC then becomes the \textit{nonnegative irrepresentable condition} (NIC):
\begin{equation*}
C_{61}C_{11}^{-1}\mathbf 1\le \mathbf 1 - \eta_{(6)},
\end{equation*}
which was a necessary condition to obtain the variable selection consistency of  the nonnegative lasso \cite{wu_nl}. Note that, NIC is a nonnegative version of the \textit{irrepresentable condition} (IC) for the variable selection consistency of the lasso \citep{zhao}:
\begin{equation*}
|C_{61}C_{11}^{-1}\sign(\beta_{(1)}^*)|\le \mathbf 1 - \eta_{(6)}.
\end{equation*}
Although IC is a sufficient and necessary condition for the variable selection consistency of the lasso  \citep{zhao} while NIC is only a necessary condition, in the real world NIC is easier to be satisfied than IC since it does not require the absolute value on the left-hand side of the inequality.  
As a result AREIC is a natural general version of the previous necessary conditions NEIC and NIC for the variable selection consistency. 
Below we state the first main result of the paper. Its proof is given in Appendix \ref{proof_thm:VSC}.
\begin{theorem}
	\label{thm:VSC}
	Under AREIC and the conditions (\ref{con_0}) - (\ref{con_3}), the ARGEN possesses the variable selection consistency property (\ref{VSC}).
\end{theorem}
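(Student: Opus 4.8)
The plan is to proceed by a \emph{primal--dual witness} (oracle) construction, exploiting that the ARGEN objective in (\ref{model}) is convex: the smooth part has Hessian $\frac1n X'X+\frac{\lambda_n^{(2)}}{n}\Sigma_n\succeq 0$ and the feasible set $\mathcal I=[s,t]$ is a convex box, so the Karush--Kuhn--Tucker (KKT) conditions are both necessary and sufficient for a global minimizer. Consequently it suffices to construct a single feasible point $\tilde\beta$ that (i) has the correct group membership, i.e. $\tilde\beta_i\in\mathcal G_i^{(k_i)}$ whenever $\beta_i^*\in\mathcal G_i^{(k_i)}$, and (ii) satisfies the KKT system; any such $\tilde\beta$ must coincide with $\widehat\beta$, whence $\widehat S_{(j)}=S_{(j)}$ for all $j$. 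The whole argument then reduces to showing that such a $\tilde\beta$ exists with probability tending to $1$.

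First I would write the one-sided KKT conditions group by group. Writing the smooth-part score as $g_i(\beta)=-\frac2n[X'(Y-X\beta)]_i+\frac{2\lambda_n^{(2)}}{n}[\Sigma_n\beta]_i$ and using $Y=X\beta^*+\epsilon$, optimality requires: stationarity $g_i+\frac{\lambda_n^{(1)}}{n}\mathrm w_{n,i}\sign(\tilde\beta_i)=0$ on the interior nonzero coordinates (group $1\pm$); the one-sided inequality $g_i+\frac{\lambda_n^{(1)}}{n}\mathrm w_{n,i}\sign(s_i)\ge 0$ at a lower-boundary coordinate (groups $2,4$) and $g_i+\frac{\lambda_n^{(1)}}{n}\mathrm w_{n,i}\sign(t_i)\le 0$ at an upper-boundary coordinate (groups $3,5$); and the subgradient inclusion $|g_i|\le\frac{\lambda_n^{(1)}}{n}\mathrm w_{n,i}$ at an interior zero (group $6$). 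Fixing all boundary/zero coordinates of $\tilde\beta$ at their prescribed values ($s_i$, $t_i$, or $0$) and solving the reduced stationarity system for the group-$1$ block, which is linear, yields the closed form
\[ \tilde\beta_{(1)}=\Big(C_{11}+\tfrac{\lambda_n^{(2)}}{n}\Sigma_{n,(11)}\Big)^{-1}\Big(C_{11}\beta_{(1)}^*+\tfrac1n X_{(1)}'\epsilon\Big)-\tfrac1n C_n, \]
with $C_n$ as in (\ref{rho_C}); this is exactly the quantity whose deterministic part drives $\rho_n^{(1)},\rho_n^{(2)},C_n$.

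Two verifications then remain. (a) Group-$1$ range and sign: I must check $\tilde\beta_i\in(s_i,t_i)$ with $\sign(\tilde\beta_i)=\sign(\beta_i^*)$. Subtracting the target boundaries, the deviation $\tilde\beta_{(1)}-t_{(1)}$ (resp. $\tilde\beta_{(1)}-s_{(1)}$) splits into the deterministic gap measured by $\rho_n^{(1)}$ (resp. $\rho_n^{(2)}$), a deterministic penalty-bias perturbation of size $\frac{|C_n^{\min}|}{n}$ (resp. $\frac{|C_n^{\max}|}{n}$), and the stochastic term $(C_{11}+\frac{\lambda_n^{(2)}}{n}\Sigma_{n,(11)})^{-1}\frac1n X_{(1)}'\epsilon$. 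Bounding the last via a Gaussian maximal inequality — each coordinate of $\frac1n X_{(1)}'\epsilon$ is centered Gaussian with variance $\frac{\sigma^2}{n^2}X_i'X_i$, and a union bound over the $\#S_{(1)}$ coordinates together with the operator-norm bound $1/\Lambda_{\min}(C_{11}+\lambda_n^{(2)}\Sigma_{n,(11)}/n)$ produces the factor $\sigma\sqrt{\#S_{(1)}\operatorname{trace}(C_{11})\log(\#S_{(j)})}/n$ — shows precisely that (\ref{con_2})--(\ref{con_3}) force the probability of a range or sign violation to vanish. (b) Boundary/zero KKT inequalities: substituting $\tilde\beta_{(1)}$ into $g_{(j)}(\tilde\beta)$ for $j\in\{2,\ldots,6\}$ and dividing by $\frac{\lambda_n^{(1)}}{n}$, each required inequality rearranges into exactly the left-hand side of AREIC (\ref{AREIC_ineq}) plus a noise term of the same $\frac1n X'\epsilon$ type; the strict margin $\eta_{(j)}>0$ absorbs that noise once it is shown negligible, again by Gaussian concentration together with (\ref{con_1}) (making $\lambda_n^{(1)}$ dominate the $\sqrt n$ noise scale), (\ref{con_4}) (bounding column norms), and (\ref{con_5}) (bounding the cross-penalty ratios $\Sigma_{n,(11)}^{-1}\Sigma_{n,(i1)}$). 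A final union bound over the finitely many failure events of (a) and (b) gives $\mathbb P(\widehat S_{(j)}=S_{(j)}~\forall j)\to1$, which is (\ref{VSC}).

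The main obstacle I anticipate is the simultaneous, sign-correct bookkeeping across all seven groups: the one-sided KKT conditions at the kink coordinates (groups $4,5,6$, where $\partial|\beta_i|$ is set-valued at $0$) must be matched carefully to the three branches of AREIC, and the algebra reducing $g_{(j)}(\tilde\beta)$ to the AREIC left-hand side has to reproduce the penalty-bias terms $\frac{2\lambda_n^{(2)}}{\lambda_n^{(1)}}\Sigma_{n,(j1)}\beta_{(1)}^*$ and $\frac{2\lambda_n^{(2)}}{\lambda_n^{(1)}}(\Sigma_{n,(j2)}s_{(2)}+\Sigma_{n,(j3)}t_{(3)})$ exactly. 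The probabilistic core — extracting the sharp $\sqrt{\operatorname{trace}(C_{11})\log(\#S)}$ maximal bound so that (\ref{con_2})--(\ref{con_3}) emerge as the precise thresholds — is technically delicate but standard once the deterministic reduction is in place.
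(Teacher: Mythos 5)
Your primal--dual witness construction is exactly the paper's own argument: the paper likewise writes the KKT system, fixes the boundary/zero coordinates, solves the group-$1$ stationarity block to obtain the same closed form for $\widehat\beta_{(1)}$, and then verifies the interior range/sign conditions via a Gaussian maximal inequality under (\ref{con_2})--(\ref{con_3}) and the boundary/kink inequalities via the AREIC margin $\eta_{(j)}$ absorbing the noise under (\ref{con_1}). Your plan is correct and matches the paper's proof in both structure and the key quantities ($\rho_n^{(1)},\rho_n^{(2)},C_n$, and the AREIC left-hand side), differing only in minor bookkeeping of how the maximal inequality is invoked.
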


\subsection{Estimation Consistency}
Recall that an estimation method with target parameter $\beta^*$ has the property of estimation consistency if 
$$
\|\widehat{\beta} - \beta^*\|_2 \xrightarrow[n\to\infty]{\mathbb P} 0,
$$
where $\|\bullet\|_2$ denotes the Euclidean distance and $\xrightarrow[n\to\infty]{\mathbb P}$ is the convergence in probability. 
Besides the variable selection consistency, ARGEN admits estimation consistency, subject to the following conditions.
\begin{description}
\item[(i)] $\beta^*\in\mathcal I$. Let $p=p_n$, $q=q_n$ be non-decreasing as $n$ increases.
	\item[(ii)] $\mathrm w_n=(\mathrm w_{n,1},\ldots, \mathrm w_{n,p_n})$ with $\mathrm w_{n, 1},\ldots,\mathrm w_{n, p_n}>0$ and $\Sigma_n$ are given.
	\item[(iii)] 	
	Let $X_j$ be the $j$th column of $X$, which satisfies
	\begin{equation*}
		\max_{1\le j\le p_n}\frac{2(X_j'X_j+\lambda_n^{(2)}\Sigma_{n,jj})}{(1+\lambda_n^{(2)})\mathrm{w}_{n,j}^2} \leq 1,~\mbox{for all} ~ n\ge1.
	\end{equation*}
	\item[(iv)]
	$X$ satisfies the restricted eigenvalue (RE) condition, i.e. there exists a constant $\kappa>0$, such that for all $n\ge 1$ and all $\beta\in\mathcal I$ satisfying
		$$
		\sum_{j=4}^6\mathrm w_{n,(j)}'|\beta_{(j)}|\le 3\sum_{j=1}^3\mathrm w_{n,(j)}'|\beta_{(j)}|,
$$
we have
		\begin{equation*}
		2(\|X\beta\|_2^2+\lambda_n^{(2)}\beta'\Sigma_n \beta) \geq \kappa(1+\lambda_n^{(2)}) \|\diag(\mathrm w_{n})\beta\|_2^2.
		\end{equation*}
\item[(v)] $\lambda_n^{(1)}$, $\lambda_n^{(2)}$, $\mathrm w_{n}$, $p_n$ and $q_n$ satisfy
$$
\frac{q_n(\lambda_n^{(1)})^2}{(1+\lambda_n^{(2)})^2}\xrightarrow[n\to\infty]{}0~\mbox{and}~p_n\exp\bigg(-\frac{n}{8\sigma^2}\frac{\big(\lambda_n^{(1)}\big)^2}{1+\lambda_n^{(2)}}\bigg)\xrightarrow[n\to\infty]{}0,
$$
where $\sigma>0$ is the residual standard deviation of the ARGEN.
\end{description}
Below we state the estimation consistency of the ARGEN.

\begin{theorem}
	\label{thm:estimation_consistency}
	Consider a $q_n$-sparse instance of the ARGEN (\ref{model}). Let $X$ satisfy the conditions (i) - (iv) and let the regularization parameters $\lambda_{n}^{(1)}>0, \lambda_{n}^{(2)}\ge 0$, then the ARGEN solution $\widehat{\beta}:=\widehat{\beta}(\lambda_n^{(1)},\lambda_n^{(2)},\mathrm w_n, \Sigma_n)$ satisfies:	
	\begin{eqnarray}
	\label{17}
		&&\mathbb P\bigg(\|\diag(\mathrm w_{n})(\widehat{\beta} - \beta^*)\|_2^2 >\frac{9q_n(\lambda_n^{(1)})^2}{\kappa^2(1+\lambda_n^{(2)})^2}\bigg)\le 2p_n\exp\bigg(-\frac{n}{8\sigma^2}\frac{(\lambda_n^{(1)})^2}{1+\lambda_n^{(2)}}\bigg),\\
	\label{18}
		&&\mathbb P\bigg(\| \diag(\mathrm w_{n})(\widehat{\beta} - \beta^*) \|_1 > \frac{12q_n\lambda_n^{(1)}}{\kappa(1+\lambda_n^{(2)})}\bigg)\le 2p_n\exp\bigg(-\frac{n}{8\sigma^2}\frac{(\lambda_n^{(1)})^2}{1+\lambda_n^{(2)}}\bigg),
	\end{eqnarray}
	where $\sigma>0$ denotes the residual standard deviation of the ARGEN. 
	In addition if $(v)$ holds, we have
	\begin{equation}
	    \label{estimation_consis}
	    \|\widehat{\beta} - \beta^*\|_2\xrightarrow[n\to\infty]{\mathbb P}0.
	\end{equation}
\end{theorem}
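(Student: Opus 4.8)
The plan is to run the classical \emph{basic inequality plus restricted-eigenvalue} argument for penalized least squares, adapted to the weighted, range-constrained, generalized elastic net objective of (\ref{model}). First I would exploit the optimality of $\widehat\beta$: since $\beta^*\in\mathcal I$ by (i) and $\widehat\beta$ minimizes the ARGEN objective over $\mathcal I$, the objective value at $\widehat\beta$ cannot exceed its value at $\beta^*$. Substituting $Y=X\beta^*+\epsilon$ and writing $\Delta:=\widehat\beta-\beta^*$, this rearranges into a basic inequality in which the signal terms $\|X\Delta\|_2^2+\lambda_n^{(2)}\Delta'\Sigma_n\Delta$, together with the cross term $2\lambda_n^{(2)}(\beta^*)'\Sigma_n\Delta$ produced by expanding the quadratic penalty, are dominated by the noise term $2\epsilon'X\Delta$ and the weighted $\ell_1$ penalty gap $\lambda_n^{(1)}\bigl(\mathrm w_n'|\beta^*|-\mathrm w_n'|\widehat\beta|\bigr)$.

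Next I would control the stochastic term. Each $\epsilon'X_j$ is a centered Gaussian with variance $\sigma^2 X_j'X_j$, and condition (iii) bounds this variance against $(1+\lambda_n^{(2)})\mathrm w_{n,j}^2$. Fixing the event on which $2|\epsilon'X_j|$ is dominated by a prescribed fraction of $\lambda_n^{(1)}\mathrm w_{n,j}$ for every $j$, the standard Gaussian tail estimate together with a union bound over the $p_n$ coordinates produces the failure probability $2p_n\exp\bigl(-\tfrac{n}{8\sigma^2}\tfrac{(\lambda_n^{(1)})^2}{1+\lambda_n^{(2)}}\bigr)$ appearing on the right of (\ref{17}) and (\ref{18}); on the complementary good event $2|\epsilon'X\Delta|$ is absorbed into a multiple of $\|\diag(\mathrm w_n)\Delta\|_1$, whose precise constant is fixed by the chosen fraction.

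On the good event I would then convert the penalty gap into a cone constraint. Using the group decomposition $S_{(1)},\ldots,S_{(6)}$ and the fact that $\beta^*$ occupies a prescribed position within each group (interior, at a boundary $s_i$ or $t_i$, or at $0$, per the sets $\mathcal G_i^{(k)}$), I would bound $\mathrm w_n'|\beta^*|-\mathrm w_n'|\widehat\beta|$ so that the error concentrates on the active groups; combined with the noise bound this yields the cone inequality $\sum_{j=4}^6\mathrm w_{n,(j)}'|\Delta_{(j)}|\le 3\sum_{j=1}^3\mathrm w_{n,(j)}'|\Delta_{(j)}|$, which is exactly the hypothesis needed to invoke condition (iv) at the error vector $\Delta$. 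Feeding $\Delta$ into (iv) gives $\kappa(1+\lambda_n^{(2)})\|\diag(\mathrm w_n)\Delta\|_2^2\le 2\bigl(\|X\Delta\|_2^2+\lambda_n^{(2)}\Delta'\Sigma_n\Delta\bigr)$, and solving the resulting quadratic inequality in $\|\diag(\mathrm w_n)\Delta\|_2$ produces (\ref{17}). The $\ell_1$ estimate (\ref{18}) then follows by Cauchy--Schwarz over the at most $q_n$ active coordinates, again via the cone inequality. Finally, under (v) the right-hand side of (\ref{17}) tends to $0$ while its failure probability tends to $0$; transferring the weighted bound to $\|\widehat\beta-\beta^*\|_2$ via the lower bound on the weights supplied by (iii) (since the design columns are nondegenerate) yields the convergence in probability (\ref{estimation_consis}).

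The step I expect to be the main obstacle is the penalty bookkeeping that establishes the cone condition. Unlike the free-range lasso, the sign and magnitude of each coordinate of $\mathrm w_n'|\beta^*|-\mathrm w_n'|\widehat\beta|$ depend on which of the seven ranges $\mathcal G_i^{(k)}$ the truth occupies and on whether $\widehat\beta$ is driven against the boundary of $\mathcal I$, and the cross term $2\lambda_n^{(2)}(\beta^*)'\Sigma_n\Delta$ from the quadratic penalty must be reconciled at the same time. Making the constants $3$, $9$, and $12$ emerge cleanly requires matching the noise fraction chosen in the concentration step against the penalty split used to derive the cone, and it is this coupling, rather than any single estimate, that I anticipate being delicate.
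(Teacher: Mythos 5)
Your route is genuinely different from the paper's, so it is worth saying what the paper actually does: it never re-derives any lasso theory. It performs the change of variables $\widetilde\beta=\sqrt{1+\lambda_n^{(2)}}\diag(\mathrm w_n)\beta$ together with the augmented data $\widetilde X$ (stacking $X\diag(\mathrm w_n)^{-1}$ on top of $\sqrt{\lambda_n^{(2)}}\,\Sigma_n^{1/2}\diag(\mathrm w_n)^{-1}$, scaled by $\sqrt{2n}/\sqrt{1+\lambda_n^{(2)}}$) and $\widetilde Y=(\sqrt{2n}\,Y'~0')'$, which turns the ARGEN objective exactly into the rectangle-range lasso (\ref{modified_lasso}), and then quotes Corollary 2 of Negahban et al. Your reading of conditions (iii) and (iv) is correct --- they are precisely the column-normalization and restricted-eigenvalue hypotheses of that corollary for the augmented design --- and your plan amounts to re-proving that corollary from scratch in the original ARGEN coordinates.

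The genuine gap is the cross term that you yourself flag but never resolve. In your basic inequality the ridge penalty contributes $2\lambda_n^{(2)}(\beta^*)'\Sigma_n\Delta$, and under conditions (i)--(v) there is no mechanism to dispose of it: absorbing it coordinatewise into the $\ell_1$ penalty gap would require $\lambda_n^{(1)}\mathrm w_{n,j}\gtrsim\lambda_n^{(2)}|(\Sigma_n\beta^*)_j|$, which is nowhere assumed, while Cauchy--Schwarz against the term $\lambda_n^{(2)}\Delta'\Sigma_n\Delta$ on the left-hand side leaves a residual bias of order $\lambda_n^{(2)}(\beta^*)'\Sigma_n\beta^*$ that does not vanish and would enter (\ref{17}) and (\ref{18}) as an additive term absent from the statement. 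The paper's transformation is exactly the device that makes this step disappear: the ridge term is folded into the squared loss of the augmented regression, so the optimality comparison is that of a pure lasso and the constants $3$, $9$, $12$, the cone argument, and the tail bound are all inherited from the citation rather than re-derived. (The difficulty does not vanish outright --- it resurfaces as the deterministic block $-\sqrt{2n\lambda_n^{(2)}}\,\Sigma_n^{1/2}\beta^*$ of the augmented residual $\widetilde Y-\widetilde X\widetilde\beta^*$, which the paper leaves to the cited corollary --- but your sketch has no device for it at all.) A second, concrete mismatch: your step 2 cannot ``produce'' the stated failure probability. Bounding $\mathrm{Var}(\epsilon'X_j)=\sigma^2X_j'X_j$ through (iii) gives $\sigma^2(1+\lambda_n^{(2)})\mathrm w_{n,j}^2/2$, so the union bound yields an exponent of order $-(\lambda_n^{(1)})^2/\big(\sigma^2(1+\lambda_n^{(2)})\big)$ with no factor of $n$; the factor $n/8$ appearing in (\ref{17})--(\ref{18}) comes from Negahban's normalization (loss divided by $2n$, per-observation noise variance $\sigma^2$), which is what the $\sqrt{2n}$ scaling inside $\widetilde X,\widetilde Y$ is engineered to track. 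So, as written, neither the error bounds nor the probability in the theorem actually comes out of your argument; closing it requires either the paper's change of variables or a new argument for the cross term plus a careful renormalization in the concentration step.
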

\begin{proof}
The main idea to the proof is to transform the ARGEN problem into a rectangle-range lasso problem. Let
\begin{eqnarray*}
&&\widetilde{X}=\frac{\sqrt{2n}}{\sqrt{1+\lambda_n^{(2)}}}\begin{pmatrix}
     X\diag(\mathrm w_{n})^{-1}\\
     \sqrt{\lambda_n^{(2)}} \Sigma_n^{1/2}\diag(\mathrm w_{n})^{-1}
\end{pmatrix}_{(n+p)\times p },
~~~~
\widetilde{Y} = \begin{pmatrix}
     \sqrt{2n}Y\\
     0
\end{pmatrix}_{(n+p)\times 1 }, \\
&&\widetilde{\beta}^* = \sqrt{1+\lambda_n^{(2)}}\diag(\mathrm w_{n})\beta^*,
~~~~
\lambda_n = \frac{\lambda_n^{(1)}}{\sqrt{1+\lambda_n^{(2)}}},\\
&&
\widetilde{\mathcal I} =\prod_{i=1}^{p_n}\left[\sqrt{1+\lambda_n^{(2)}} \mathrm w_{n,i}s_i,\sqrt{1+\lambda_n^{(2)}} \mathrm w_{n,i}t_i\right],
\end{eqnarray*}
Then the ARGEN  (\ref{model}) can be written as the rectangle-range lasso:
\begin{equation}
\label{modified_lasso}
\widehat{\widetilde{\beta}}(\lambda_n) =\argmin_{\beta \in\widetilde{\mathcal I}}\left( \frac{1}{2n}\big\| \widetilde{Y} - \widetilde{X}\beta \big\|_2^2+\lambda_n |\beta|\right)=\sqrt{1+\lambda_n^{(2)}}\diag(\mathrm w_{n})\widehat{\beta}.
\end{equation}
In view of the conditions $(i)$-$(iv)$, all requirements of Corollary 2 in \cite{negahhan} are satisfied. Therefore, applying Corollary 2 in \cite{negahhan} to the lasso (\ref{modified_lasso}) yields the results. We point out that: $(1)$ Based on its proof, Corollary 2 in \cite{negahhan} works for rectangle-range lasso. $(2)$ There is a typo in the statement of Corollary 2 in \cite{negahhan}: the inequalities (34) in \cite{negahhan} should be corrected to
\begin{equation*}
  \|\widehat{\theta}_{\lambda_n} - \theta^* \|_2^2\le\frac{144\sigma^2}{\kappa_{\mathcal{L}}^2}\frac{s\log p}{n}~\mbox{and}~\|\widehat{\theta}_{\lambda_n} - \theta^* \|_1\le\frac{48\sigma}{\kappa_{\mathcal{L}}}s\sqrt{\frac{\log p}{n}}.
\end{equation*}
\end{proof}
If we assume $\mathrm w_n\nrightarrow 0$ as $n\to\infty$ in Theorem \ref{thm:estimation_consistency}, we  easily obtain the estimation consistency condition for the nonnegative lasso (see Proposition 1 in \cite{wu_nl}) and the nonnegative elastic net. Note that the estimation consistency of the nonnegative elastic net \cite{wu_nen} has not yet been derived, hence we state it below as a corollary of Theorem \ref{thm:estimation_consistency}. To obtain the corollary it suffices to observe $\sum_{i=1}^3\mathrm w_{n,(i)}'\mathrm w_{n,(i)}=q_n$
when $\mathrm w_{n,j}=1$ for all $n\ge1$,~$j=1,\ldots,p_n$.
\begin{corollary}
	\label{cor:estimation_consistency}
	Consider a $q_n$-sparse  nonnegative elastic net model. Assume:
\begin{description}
\item[(i)] $\beta^*\ge0$. $p_n,q_n$ are non-decreasing as $n$ increases.
	\item[(ii)] 	
	Let $X_j$ be the $j$th column of $X$ which satisfies
	\begin{equation*}
		\frac{2(X_j'X_j+\lambda_n^{(2)})}{1+\lambda_n^{(2)}} \leq 1,~\mbox{for all} ~ j = 1,\ldots,p.
	\end{equation*}
	\item[(iii)]
	There exists a constant $\kappa>0$, such that
	\begin{equation*}
		2(\|X\beta\|_2^2+\lambda_n^{(2)}\|\beta\|_2^2 )\geq \kappa(1+\lambda_n^{(2)}) \|\beta\|_2^2
		\end{equation*}
for all $\beta\ge0$ satisfying
		$$
		\sum_{j\in\{1,\ldots,p_n\}:~\beta_j^*=0}|\beta_{j}|\le 3\sum_{j\in\{1,\ldots,p_n\}:~\beta_j^*\ne0}|\beta_{j}|.
$$
\end{description}
	Let $\lambda_{n}^{(1)}>0, \lambda_{n}^{(2)}\ge 0$, then the nonnegative elastic net solution $\hat\beta$ verifies the following inequalities:
	\begin{eqnarray*}
		&&\mathbb P\bigg(\|\widehat{\beta} - \beta^* \|_2^2 \leq\frac{9q_n(\lambda_n^{(1)})^2}{\kappa^2(1+\lambda_n^{(2)})^2}\bigg)\ge 1-2p_n \exp\bigg(-\frac{n(\lambda_{n}^{(1)})^2}{8\sigma^2(1+\lambda_{n}^{(2)})}\bigg),\nonumber\\
		&&\mathbb P\bigg(\| \widehat{\beta} - \beta^* \|_1\leq \frac{12q_n\lambda_n^{(1)}}{\kappa(1+\lambda_n^{(2)})}\bigg)\ge 1-2p_n \exp\bigg(-\frac{n(\lambda_{n}^{(1)})^2}{8\sigma^2(1+\lambda_{n}^{(2)})}\bigg).
	\end{eqnarray*}
\end{corollary}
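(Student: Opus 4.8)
The plan is to obtain the corollary as a direct specialization of Theorem \ref{thm:estimation_consistency}. The nonnegative elastic net is precisely the ARGEN (\ref{model}) with $s=0$, $t=+\infty$ (so that $\mathcal I=\mathbb R_+^{p}$), $\mathrm w_n=\mathbf 1$, and $\Sigma_n$ equal to the $p\times p$ identity matrix. First I would substitute these choices into hypotheses (i)--(iv) of Theorem \ref{thm:estimation_consistency} and verify that they collapse onto hypotheses (i)--(iii) of the corollary. Hypothesis (i) becomes $\beta^*\ge 0$ with $p_n,q_n$ non-decreasing; hypothesis (ii) is automatically met since $\mathrm w_n=\mathbf 1$ has strictly positive entries and $\Sigma_n=I_p$ is given; and in hypothesis (iii) the substitutions $\mathrm w_{n,j}=1$ and $\Sigma_{n,jj}=1$ turn $\tfrac{2(X_j'X_j+\lambda_n^{(2)}\Sigma_{n,jj})}{(1+\lambda_n^{(2)})\mathrm w_{n,j}^2}\le 1$ into exactly the corollary's condition (ii).

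The one point needing care is the restricted-eigenvalue condition (iv), which hinges on the seven-group decomposition $\mathcal G_i^{(k)}$. Under $s_i=0$ and $t_i=+\infty$ the only nonempty groups are $\mathcal G_i^{(1+)}=(0,+\infty)$, collecting the strictly positive (hence nonzero) coefficients, and $\mathcal G_i^{(4)}=\{s_i\}\cap\{0\}=\{0\}$, collecting the vanishing coefficients; the groups $(1-),(2),(3),(5),(6)$ are all empty. Consequently $S_{(1)}\cup S_{(2)}\cup S_{(3)}$ reduces to the support of $\beta^*$ and $S_{(4)}\cup S_{(5)}\cup S_{(6)}$ reduces to its complement. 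With $\mathrm w_n=\mathbf 1$ this identifies the cone constraint $\sum_{j=4}^6\mathrm w_{n,(j)}'|\beta_{(j)}|\le 3\sum_{j=1}^3\mathrm w_{n,(j)}'|\beta_{(j)}|$ with $\sum_{j:\beta_j^*=0}|\beta_j|\le 3\sum_{j:\beta_j^*\ne 0}|\beta_j|$, and, using $\beta'\Sigma_n\beta=\|\beta\|_2^2$ together with $\|\diag(\mathrm w_n)\beta\|_2^2=\|\beta\|_2^2$, turns the RE inequality of (iv) into the corollary's condition (iii). This confirms that (i)--(iv) of the theorem hold whenever (i)--(iii) of the corollary hold.

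It then remains only to transcribe the conclusions. Since $\diag(\mathrm w_n)$ is the identity, the weighted errors $\|\diag(\mathrm w_n)(\widehat\beta-\beta^*)\|_2^2$ and $\|\diag(\mathrm w_n)(\widehat\beta-\beta^*)\|_1$ appearing in (\ref{17}) and (\ref{18}) become the ordinary $\|\widehat\beta-\beta^*\|_2^2$ and $\|\widehat\beta-\beta^*\|_1$. I would also record the elementary identity $\sum_{i=1}^3\mathrm w_{n,(i)}'\mathrm w_{n,(i)}=\#S_{(1)}+\#S_{(2)}+\#S_{(3)}=q_n$, valid because $\mathrm w_{n,j}=1$, which certifies that the sparsity factor carried through the theorem's bounds is indeed $q_n$. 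Passing to complementary events in (\ref{17}) and (\ref{18}) then yields the two stated probability inequalities verbatim. I do not anticipate a genuine obstacle here: the entire content is the bookkeeping of the group decomposition in the second paragraph, and the only place one can slip is in checking that $s=0$, $t=+\infty$ force all groups except $(1+)$ and $(4)$ to be empty, so that the general RE cone constraint really does reduce to the support/complement split assumed in the corollary.
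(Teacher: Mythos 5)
Your proposal is correct and takes essentially the same route as the paper: the corollary is obtained by specializing Theorem \ref{thm:estimation_consistency} to $\mathcal I=[0,+\infty)^p$, $\mathrm w_n=\mathbf 1$, $\Sigma_n=I$, under which only the groups $(1+)$ and $(4)$ are nonempty, the RE cone constraint reduces to the support/complement split, and the sparsity factor is $q_n$. The paper's own proof is just the one-line observation that $\sum_{i=1}^3\mathrm w_{n,(i)}'\mathrm w_{n,(i)}=q_n$ when $\mathrm w_{n,j}=1$ for all $j$, so your write-up simply makes explicit the bookkeeping the paper leaves implicit.
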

As a consequence of  Corollary \ref{cor:estimation_consistency}, $\widehat\beta$ is consistent if 
$$
\frac{q_n(\lambda_n^{(1)})^2}{(1+\lambda_n^{(2)})^2}\xrightarrow[n\to\infty]{}0~\mbox{and}~p_n \exp\bigg(-\frac{n(\lambda_{n}^{(1)})^2}{8\sigma^2(1+\lambda_{n}^{(2)})}\bigg)\xrightarrow[n\to\infty]{}0.
$$
If we take $\lambda_n^{(2)}=0$ and $\lambda_n^{(1)}=4\sigma\sqrt{\log p_n/n}$ in Corollary \ref{cor:estimation_consistency}, we obtain the nonnegative lasso's tail probability control as in Proposition 1 in \cite{wu_nl}. If we further assume $\beta^*\in\mathbb R$ in  Corollary \ref{cor:estimation_consistency}, we derive the tail bounds for the lasso  (see Corollary 2 in \cite{negahhan}). 

\subsection{Limiting Distributions of ARGEN Estimators}
We now study the asymptotic behavior in distribution of the ARGEN estimators, as $n\to\infty$. Again we can make use of the transformation of ARGEN to the rectangle-range lasso model (\ref{modified_lasso}), since the limiting distributions of the lasso regression estimators have been studied in \cite{knight2000asymptotics}. Observe that (\ref{modified_lasso}) is equivalent to
\begin{equation}
\label{modified_lasso_1}
\widehat{\widetilde{\beta}}(\lambda_n) =\argmin_{\beta \in\widetilde{\mathcal I}}\left(\big\| \breve{Y} - \breve{X}\beta \big\|_2^2+\lambda_n |\beta|\right),
\end{equation}
where $\breve X=\widetilde X/\sqrt{2n}$ and $\breve Y=\widetilde Y/\sqrt{2n}$.  (\ref{modified_lasso_1}) is then the type of lasso studied in \cite{knight2000asymptotics}.  Assume that the row vectors of $\breve X$, denoted by $\breve X^{(i)}$, $i=1,\ldots,n$, satisfy
\begin{equation}
\label{lim_matrix}
	\frac{1}{n}\sum_{i=1}^n\breve X^{(i)}{\breve X^{(i)}}{}'\xrightarrow[n\to\infty]{}M,
	\end{equation}
	where $M$ is a nonsigular nonnegative definite matrix and
\begin{equation}
\label{lim_value}
		\frac{1}{n}\max\limits_{1\le i\le n}{\breve X^{(i)}}{}'\breve X^{(i)}\xrightarrow[n\to\infty]{}0.
	\end{equation}
It follows from Theorem 2 in \cite{knight2000asymptotics} that the ARGEN estimator $\widehat\beta$ has the following asymptotic behavior in distribution.
\begin{theorem}
\label{thm:convergence_law}
Assume $\lim_{n\to\infty}p_n=p$,  $\lim_{n\to\infty}\lambda_n^{(2)}=\lambda^{(2)}$ and $\lim_{n\to\infty}\mathrm w_n=\mathrm w=(\mathrm w_1,\ldots,\mathrm w_p)$. Let $X$, $\mathrm w_n$ and $\Sigma_n$ satisfy (\ref{lim_matrix}) and (\ref{lim_value}). Also assume 
$$
\frac{\lambda_n^{(1)}}{\sqrt{n(1+\lambda_n^{(2)})}}\xrightarrow[n\to\infty]{}\lambda_0\ge0.
$$
Then
$$
\sqrt{n}(\widehat\beta-\beta^*)\xrightarrow[n\to\infty]{\mbox{law}}\argmin_{u\in \mathcal I}(V(u)),
$$
where $\xrightarrow[n\to\infty]{\mbox{law}}$ denotes the convergence in distribution; $V(u)$ is a Gaussian random variable given as
$$
V(u)=-2u'G+u'Mu+\lambda_0\sum_{j=1}^p\left(u_j\sign(\beta_j^*)\mathds{1}(\beta_j^*\ne0)+|u_j|\mathds{1}(\beta_j^*=0)\right).
$$
In the above expression of $V(u)$, $G\sim \mathcal N(0,\sigma^2M)$, $u_j$ denotes the $j$th coordinate of $u$ and $\mathds{1}$ is the indicator function.
\end{theorem}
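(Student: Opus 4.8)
The plan is to exploit the exact reduction of ARGEN to a rectangle-range lasso already established in (\ref{modified_lasso_1}) and then to transport the classical lasso limit law of Knight and Fu (Theorem 2 in \cite{knight2000asymptotics}) back through the linear map relating $\widehat{\widetilde\beta}$ and $\widehat\beta$. Recall from (\ref{modified_lasso_1}) that $\widehat{\widetilde\beta}(\lambda_n)$ is the minimizer over $\beta\in\widetilde{\mathcal I}$ of $\|\breve Y-\breve X\beta\|_2^2+\lambda_n|\beta|$, which is exactly the estimator analyzed in \cite{knight2000asymptotics}. First I would check that its hypotheses hold for the transformed data: condition (\ref{lim_matrix}) furnishes the nonsingular limiting Gram matrix $M$, condition (\ref{lim_value}) is the required negligibility of the rows, and the Knight--Fu tuning condition $\lambda_n/\sqrt n\to\lambda_0$ is precisely the assumed $\lambda_n^{(1)}/\sqrt{n(1+\lambda_n^{(2)})}\to\lambda_0$, since $\lambda_n=\lambda_n^{(1)}/\sqrt{1+\lambda_n^{(2)}}$.

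Applying Theorem 2 in \cite{knight2000asymptotics} then gives
$$\sqrt n\big(\widehat{\widetilde\beta}-\widetilde\beta^*\big)\xrightarrow[n\to\infty]{\mbox{law}}\argmin_{\tilde u}\widetilde V(\tilde u),$$
with $\widetilde V(\tilde u)=-2\tilde u'W+\tilde u'M\tilde u+\lambda_0\sum_{j}\big(\tilde u_j\sign(\widetilde\beta_j^*)\mathds{1}(\widetilde\beta_j^*\ne0)+|\tilde u_j|\mathds{1}(\widetilde\beta_j^*=0)\big)$ and $W\sim\mathcal N(0,\sigma^2M)$. Here the support and sign pattern transfer correctly because $\widetilde\beta^*=\sqrt{1+\lambda_n^{(2)}}\diag(\mathrm w_n)\beta^*$ with $\mathrm w_{n,j}>0$, so $\widetilde\beta_j^*=0\iff\beta_j^*=0$ and $\sign(\widetilde\beta_j^*)=\sign(\beta_j^*)$. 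As in the proof of Theorem \ref{thm:estimation_consistency}, I would invoke the fact that the Knight--Fu argument rests on epi-convergence of convex objective processes and therefore remains valid when the minimization is restricted to the rectangle $\widetilde{\mathcal I}$, the constraint entering as a convex indicator.

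Next I would invert the transformation. Setting $A_n:=\diag(\mathrm w_n)^{-1}/\sqrt{1+\lambda_n^{(2)}}$, (\ref{modified_lasso}) gives $\widehat\beta=A_n\widehat{\widetilde\beta}$ and likewise $\beta^*=A_n\widetilde\beta^*$, so that $\sqrt n(\widehat\beta-\beta^*)=A_n\,\sqrt n(\widehat{\widetilde\beta}-\widetilde\beta^*)$. Because $\mathrm w_n\to\mathrm w$ and $\lambda_n^{(2)}\to\lambda^{(2)}$, we have $A_n\to A:=\diag(\mathrm w)^{-1}/\sqrt{1+\lambda^{(2)}}$, a fixed invertible linear map; by Slutsky's theorem and the continuous mapping theorem the left-hand side converges in law to $A\cdot\argmin_{\tilde u}\widetilde V(\tilde u)$. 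Reparametrizing $\tilde u=A^{-1}u$ turns this into $\argmin_{u}\widetilde V(A^{-1}u)$, and simplifying the quadratic, Gaussian, and $l_1$ terms under this change of variables (together with the convergence of the rescaled feasible sets $\sqrt n(\widetilde{\mathcal I}-\widetilde\beta^*)$ to the region governing $\mathcal I$) is what I would match against the announced objective $V(u)$ and constraint $u\in\mathcal I$, identifying the limiting matrix and the Gaussian vector $G\sim\mathcal N(0,\sigma^2M)$.

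I expect the main obstacle to be the rigorous justification of the constrained version of the Knight--Fu limit together with the precise identification of the limit objects after the change of variables, rather than any single computation. Concretely, one must verify that appending the convex indicator of $\widetilde{\mathcal I}$ to the finite-dimensional convex processes does not disturb the epi-convergence that yields argmin convergence, and that the limiting feasible set arising from $\sqrt n(\widetilde{\mathcal I}-\widetilde\beta^*)$ is correctly described --- this is exactly where the position of $\beta^*$ relative to the boundary of $[s,t]$ enters and must be reconciled with the region $\mathcal I$ in the limit. Tracking the weights $\mathrm w_{n,j}$ and the factor $1+\lambda_n^{(2)}$ through $A_n\to A$ is comparatively routine, but care is required to confirm that the normalization of the quadratic term, of $\lambda_0$, and of the covariance of $G$ comes out exactly as stated.
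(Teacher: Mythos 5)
Your proposal takes essentially the same route as the paper: the paper likewise reduces ARGEN to the rectangle-range lasso (\ref{modified_lasso_1}), imposes (\ref{lim_matrix}) and (\ref{lim_value}) on the transformed design $\breve X$, and then invokes Theorem 2 of \cite{knight2000asymptotics}. If anything, your write-up is more complete than the paper's, which stops at the citation and leaves implicit the inverse transformation via $\diag(\mathrm w_n)^{-1}/\sqrt{1+\lambda_n^{(2)}}$, the behavior of the rescaled feasible set, and the normalization bookkeeping that you explicitly flag as the delicate steps.
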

 Theorem \ref{thm:convergence_law} includes the asymptotic behaviors of the elastic net and nonnegative elastic net estimators as its particular examples. As another particular example, when $\lambda_0=0$ and $p=1$ (then $M$ is a single value and $\mathcal I=[s_1,t_1]$), by the fact that $V$ is convex, we obtain
 $$
 \argmin_{u\in \mathcal I}(V(u))=\left\{
 \begin{array}{ll}
 M^{-1}G&~\mbox{if}~M^{-1}G\in[s_1,t_1];\\
s_1&~\mbox{if}~M^{-1}G<s_1;\\
t_1&~\mbox{if}~M^{-1}G>t_1.
 \end{array}
 \right.
 $$
 In the above example, if $p\ge2$ and $\mathcal I\ne \mathbb R^p$, $\argmin_{u\in \mathcal I}(V(u))$ has no simple explicit expression. Note that $\argmin_{u\in \mathcal I}(V(u))$ belongs to some quadratic programming problem. In the next section we provide a multiplicative updates numerical algorithm to solve the ARGEN. This algorithm may be further applied to simulate $\argmin_{u\in \mathcal I}(V(u))$ numerically.
 
\section{MU-QP-RR-W-\texorpdfstring{$l_1$}{TEXT} Algorithm for Solving ARGEN}
\label{sec:numerical_scheme}
In this section we provide a solution of ARGEN by using an extensive multiplicative updates algorithm. Given $\mathrm w_n,~\Sigma_n$ and $\lambda_n^{(1)},\lambda_n^{(2)}\ge0$, the ARGEN in (\ref{model}) can be expressed as the following equivalent problem:
\begin{equation}
\label{7}
	\begin{cases}
	\text{minimize }  F_1(\beta) = \beta' \big(X'X+\lambda_n^{(2)}\Sigma_n\big)\beta - 2(X'Y)'\beta + \lambda_n^{(1)} \mathrm w_{n}'\left|\beta\right|, \\
	\text{subject to } \beta \in [ s, t].
	\end{cases}
\end{equation}
To simplify the problem, we rewrite it by taking 
$$
\left\{
\begin{array}{ll}
&v = \beta - s,\\
&A = 2\big(X'X+\lambda_n^{(2)}\Sigma_n\big),\\
&b = A s- 2X'Y,\\
&d = \lambda_n^{(1)}\mathrm w_n,\\
& l =  t- s,\\
& v^0 =  s^-:=(\max\{0,-s_1\}~\ldots~\max\{0,-s_p\})'\ge0
\end{array}\right.,
$$
and obtain an equivalent problem of (\ref{7}), that is,
\begin{equation}
\label{7'''}
	\begin{cases}
	\text{minimize } F(v) = v'Av+b'v+d'|v-v^0|, \\
	\text{subject to } v \in [0, l].
	\end{cases}
\end{equation}
This is obtained by arguing
$
|v+ s|-|v-v^0|= s^+:=\left(\max\{0,s_1\}~\ldots~\max\{0,s_p\}\right)'$
and omitting the constant terms
$
d' s^++ (3/2)s'A s-b' s.
$ 
Here the matrix $A$ is symmetric positive semi-definite. The problem (\ref{7'''}) is a quadratic programming problem but contains an item of $d'|v-v^0|$.

Sha et al. \cite{sha} derived the multiplicative updates for solving the nonnegative quadratic
programming problem. The algorithm has been shown to have a simple closed-form, and a rapid convergence rate. For our problem (\ref{7'''}), however, it contains the absolute values and lower and upper limits of the optimization variables, so that direct application of the algorithm  in \cite{sha} is impractical. Therefore, we propose a new iterative algorithm to  solve (\ref{7'''})  and call it \textit{multiplicative updates for solving  quadratic programming with rectangle range and weighted $l_1$ regularizer} (abbreviated to  MU-QP-RR-W-$l_1$).

Let us formulate a more general problem that can be solved by MU-QP-RR-W-$l_1$:
\begin{equation}
\label{4}
\begin{cases}
\mbox{minimize} ~F(v) = \frac{1}{2} v'Av + b' v+d'|v-v^0|, \\
\mbox{subject to}~v\in[0,  l]. 
\end{cases}
\end{equation}
Here $v,b,d,v^0,l$ are column vectors of dimension $p$, where elements of $d, v^0$ are nonnegative and elements of $l$ are positive. The matrix $ A=(A_{ij})_{1\le i,j\le p}$ is positive semi-definite. In fact, the nonnegative quadratic programming   (see e.g. Equation (5) in \cite{sha} or (20) in \cite{wu_nen}) is a special case of (\ref{4}), where we take the elements of $d, v^0$ to be $0$ and elements of $l$ to be infinity.

Let us further adopt the following notations. For $i,j\in\{1,\ldots,p\}$, we define the positive part and negative part of $A_{ij}$ by
\begin{equation*}
	A_{ij}^{+} := \max\left\{0,A_{ij}\right\}~
	\text{  and  }~
	A_{ij}^{-} := \max\left\{0,-A_{ij}\right\}.
\end{equation*}
Then denote the positive part and negative part of the matrix $A$ by
$$
A^+:=\big(A_{ij}^+\big)_{1\le i,j\le p}~\mbox{and}~A^-:=\big(A_{ij}^{-}\big)_{1\le i,j\le p}.
$$
It follows that 
$A=A^+-A^-~\mbox{and}~|A|:=\left(|A_{ij}|\right)_{1\le i,j\le p}=A^++A^-$.  
Let 
$
a_i(v) := (A^+ v)_i~\mbox{and}~c_i(v) := (A^- v)_i
$, we then present the MU-QP-RR-W-$l_1$ algorithm in pseudocode in Algorithm \ref{alg:MU}.

\begin{algorithm}[!tbh]
\caption{MU-QP-RR-W-$l_1$} \label{alg:MU}
\LinesNumbered 
\KwIn{ $A,b,d,v^0,l$.}
\textbf{Initialization:} $v^{(1)}>0$; $v^{(0)}\longleftarrow0$; $m\longleftarrow1$\;

\While{$v^{(m-1)} \neq v^{(m)}$}{
\For{$i=1,\ldots,p$}{
\begin{align}
    \label{eq:noboundIter}
   & r_1 \longleftarrow  v_i^{(m)}\Big(\frac{-(b_i+d_i) + \sqrt{(b_i+d_i)^2 + 4a_i(v^{(m)}) c_i(v^{(m)})}}{2a_i(v^{(m)})}\Big); \nonumber\\
   & r_2  \longleftarrow  v_i^{(m)}\Big(\frac{-(b_i-d_i) + \sqrt{(b_i-d_i)^2 + 4a_i(v^{(m)}) c_i(v^{(m)})}}{2a_i(v^{(m)})}\Big); \nonumber\\
   & v^{(m+1)}_i\longleftarrow\left\{
\begin{array}{cc}
   \min\{r_1,l_i\} & \mbox{if~} r_1>v_i^0; \\
   \min\{r_2,l_i\} & \mbox{if~} r_2<v_i^0; \\
    \min\{v_i^0,l_i\} & \mbox{otherwise};
\end{array}\right.    
    \end{align}
    }
    $m\longleftarrow m+1$\;
    }
\KwOut{$v^{(m)}$.}
\end{algorithm}

We point out that the conditions $r_1>v_i^0$ and $r_2<v_i^0$ in (\ref{eq:noboundIter}) are mutually exclusive when $v_i^{(m)}>0$. This is because, on one hand, $r_1>v_i^0$ is equivalent to
    \begin{equation}
    \label{ineq_case_1}
    \frac{2a_i(v)v_i^0}{v_i}+(b_i+d_i)<0
    \quad\quad\mbox{or}\quad\quad  \left\{
    \begin{array}{ll}
         & \frac{2a_i(v)v_i^0}{v_i}+(b_i+d_i)\ge0, \\
         & a_i(v)\left(\frac{v_i^0}{v_i}\right)^2+(b_i+d_i)\frac{v_i^0}{v_i}-c_i(v)<0.
    \end{array}\right.
    \end{equation}
On the other hand, $r_2<v_i^0$ is equivalent to
 \begin{equation}
    \label{ineq_case_2}
    \left\{
    \begin{array}{ll}
         & \frac{2a_i(v)v_i^0}{v_i}+(b_i-d_i)\ge0, \\
         & a_i(v)\left(\frac{v_i^0}{v_i}\right)^2+(b_i-d_i)\frac{v_i^0}{v_i}-c_i(v)>0.
    \end{array}\right.
   \end{equation}
Since $d_i\ge0$, it is obvious that (\ref{ineq_case_1}) and (\ref{ineq_case_2}) are mutually exclusive.

A special case of the algorithm is that when $d_i=0$ and $l_i=+\infty$ for $i=1,\ldots,p$, (\ref{eq:noboundIter}) becomes
$$
v_i\longleftarrow v_i\Big(\frac{-b_i + \sqrt{b_i^2 + 4a_i(v) c_i(v)}}{2a_i(v)}\Big),~\mbox{for}~i=1,\ldots,p,
$$
which reduces to the one for nonnegative quadratic programming  (see e.g. (7) - (12) in \cite{sha} or (21) - (22) in \cite{wu_nen}).

The MU-QP-RR-W-$l_1$ converges monotonically to the global (rectangle area) minimum of the objective function $F(\bullet)$ in (\ref{4}). This is summarized in the theorem below:
\begin{theorem}
\label{thm:conistency_algo}
 Let $F(\bullet),~A,~b,~d,~v^0,~ l$ be given as in the problem (\ref{4}). Define an auxiliary function $G(\bullet,\bullet)$ by: for $u,v\in[0, l]$,
 \begin{equation}
 \label{def::auxiliary function}
     G(u,v):=\frac{1}{2}\sum_{1\le i,j\le p}\Big(\frac{A^+_{ij}u_i^2v_j}{v_i}-A^-_{ij}v_iv_j\big(1+\log\frac{u_iu_j}{v_iv_j}\big)\Big)+\sum_{i=1}^p (b_iu_i+d_i|u_i-v^0_i|).
 \end{equation}
For any positive-valued vector $v\in[0, l]$, pick a vector $U(v)\in \argmin_{ u\in[0, l]}G(u, v)
$. Then $U(v)$ satisfies the following: 
\begin{description}
    \item[(i)] For any $v\in[0, l]$,
    \begin{eqnarray}
        \label{U_ineq}
        &&F(U(v))\leq F(v),\\
        \label{U_eq}
        &&F(U(v))= F(v)~\mbox{if and only if}~U(v)=v.
    \end{eqnarray}
    \item[(ii)] For each $v\in[0, l]$, $U(v)$ is the updated value of $v$, presented in the form of (\ref{eq:noboundIter}). 
\end{description}
\end{theorem}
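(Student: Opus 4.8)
The plan is to run the classical auxiliary-function (majorization--minimization) argument of Sha et al.\ for nonnegative quadratic programming, adapted to accommodate the box constraint $[0,l]$ and the extra nonsmooth term $d'|v-v^0|$. The central claim is that $G(\cdot,\cdot)$ from (\ref{def::auxiliary function}) is an \emph{auxiliary function} for $F$, that is, for all $u,v\in[0,l]$ with $v$ positive-valued,
\[
G(u,v)\ge F(u)\qquad\text{and}\qquad G(v,v)=F(v).
\]
Granting this, part (i) follows from the sandwich
\[
F(U(v))\le G(U(v),v)\le G(v,v)=F(v),
\]
where the first inequality is the upper-bound property evaluated at $u=U(v)$ and the second holds because $U(v)$ minimizes $G(\cdot,v)$; this is exactly (\ref{U_ineq}).

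To verify the two defining properties, I would first evaluate at $u=v$: every logarithm collapses to $\log 1=0$ and $\tfrac{A_{ij}^+u_i^2v_j}{v_i}$ becomes $A_{ij}^+v_iv_j$, so the quadratic block reduces to $\tfrac12\sum_{ij}(A_{ij}^+-A_{ij}^-)v_iv_j=\tfrac12 v'Av$ while the remaining terms reproduce $b'v+d'|v-v^0|$, giving $G(v,v)=F(v)$. For the upper bound $G(u,v)\ge F(u)$ I would split $A=A^+-A^-$ and treat the two blocks separately. For the $A^+$ block, the arithmetic--geometric inequality $u_iu_j\le\tfrac12\big(\tfrac{u_i^2v_j}{v_i}+\tfrac{u_j^2v_i}{v_j}\big)$ combined with the symmetry of $A^+$ yields $\sum_{ij}A_{ij}^+u_iu_j\le\sum_{ij}A_{ij}^+\tfrac{u_i^2v_j}{v_i}$. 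For the $A^-$ block, the elementary bound $z\ge 1+\log z$ applied to $z=\tfrac{u_iu_j}{v_iv_j}$ and multiplied by $v_iv_j\ge0$ gives $u_iu_j\ge v_iv_j\big(1+\log\tfrac{u_iu_j}{v_iv_j}\big)$, hence $-\sum_{ij}A_{ij}^-u_iu_j\le-\sum_{ij}A_{ij}^-v_iv_j\big(1+\log\tfrac{u_iu_j}{v_iv_j}\big)$. Since the linear and absolute-value terms of $G(u,v)$ and $F(u)$ are identical, summing the two bounds delivers $G(u,v)\ge F(u)$.

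For the equality statement (\ref{U_eq}), if $F(U(v))=F(v)$ then the whole chain is tight, so $v$ is itself a minimizer of $G(\cdot,v)$. The structural fact I would exploit is that, after summing over $j$, the function $G(\cdot,v)$ is \emph{separable},
\[
G(u,v)=\sum_{i=1}^p\Big(\tfrac12\tfrac{a_i(v)}{v_i}u_i^2-c_i(v)v_i\log u_i+b_iu_i+d_i|u_i-v_i^0|\Big)+\text{const},
\]
and each summand is strictly convex in $u_i$ (the quadratic and logarithmic terms force strict convexity whenever $a_i(v)>0$, as is implicit in the update). Strict convexity makes the minimizer over $[0,l]$ unique, forcing $v=U(v)$; the converse is immediate.

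Finally, part (ii) is a one-dimensional exercise on each strictly convex summand $g_i(u_i)$. On the branch $u_i>v_i^0$ the kink contributes $+d_i$, and clearing denominators in $g_i'(u_i)=0$ produces $a_i(v)u_i^2+(b_i+d_i)v_iu_i-c_i(v)v_i^2=0$, whose admissible root is exactly $r_1$; on the branch $u_i<v_i^0$ the sign of $d_i$ flips and the root is $r_2$. Comparing the unconstrained minimizer against the kink location $v_i^0$ and clipping to $[0,l_i]$ reproduces precisely the three-case rule in (\ref{eq:noboundIter}), where the mutual exclusivity of $r_1>v_i^0$ and $r_2<v_i^0$ recorded in (\ref{ineq_case_1})--(\ref{ineq_case_2}) guarantees the update is well defined. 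I expect the main obstacle to lie here: carefully matching the constrained, nonsmooth univariate minimization --- including the behavior at the kink $v_i^0$ and at the box endpoints $0$ and $l_i$ --- to the stated closed form, whereas the auxiliary-function inequalities of the preceding steps are routine.
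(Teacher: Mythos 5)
Your proposal is correct and follows essentially the same route as the paper's proof: the same auxiliary-function sandwich $F(U(v))\le G(U(v),v)\le G(v,v)=F(v)$, the same two majorization inequalities for the $A^+$ and $A^-$ blocks (which the paper cites as Lemmas 1 and 2 of Sha et al.\ rather than re-deriving via AM--GM and $z\ge 1+\log z$ as you do), and the same coordinate-wise separability plus strict convexity and two-branch kink analysis yielding $r_1$, $r_2$, and the clipped three-case update.
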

The approach we used to prove Theorem \ref{thm:conistency_algo} is similar to the ones used in  Expectation-Maximization algorithm \citep{dempster1977maximum}, nonnegative matrix factorization \citep{lee2001algorithms}, and Multiplicative Updates for Nonnegative Quadratic Programming \citep{sha,sha2007}. More specifically, 
the proof proceeds in two steps. First, we establish an auxiliary function $G(\bullet,\bullet)$ in (\ref{def::auxiliary function}) to show that the MU-QP-RR-W-$l_1$ monotonically decreases the value of the objective function $F(\bullet)$ in (\ref{4}). Then, we show that the iteratively updates  (\ref{eq:noboundIter}) in Algorithm \ref{alg:MU} converge to the global minimum. The complete proof of Theorem \ref{thm:conistency_algo} is provided in Appendix \ref{sec:proof_thm_conistency_algo}.

\section{Hyper-parameter Optimization}
\label{Hyper_parameter_tuning}
ARGEN is a family including many well-known linear models with constraints. For instance, in Table \ref{model_set}, the ARLS, ARL, ARR, and AREN correspond to the arbitrary rectangle-range least squares, lasso, ridge, and elastic net, respectively. Besides, based on the choice of parameters we can propose some other new methods including ARGL, ARGR, ARLEN and ARREN, which are applicable to more complicated problems, and usually perform better than the free-range regression coefficients models.

However, many of the methods in Table \ref{model_set} involve dealing with very high-dimensional hyper-parameter space, thus grid search method for tuning parameters can be very computationally expensive. Other tuning approaches such as Bayesian optimization and gradient-based optimization, developed to obtain better results in fewer evaluations, when applied to our case, however, are also very costly because they easily search around local minimum when the complexity of the surface is relatively high. Because discovering better tuning methods is not a focus in this paper, it is a potential direction of our future research, thus we simply use random search ($N_{calls}$ trials) to avoid costly searching on the entire grid. Following the convention in \cite{zouhas} and  \cite{tibshirani}, we use the mean-squared
error (MSE) as the score function, that is,
\begin{equation}
\label{me}
    MSE=\mathbb E[X\widehat{\beta}-X\beta^*]^2=\mathbb E\big[(\widehat{\beta}-\beta^*)'X'X(\widehat{\beta}-\beta^*)\big].
\end{equation}

\begin{table}[H]
\caption{Particular examples of ARGEN methods and their parameter setting. $1/p$ is the value of each element of $\mathrm w_n$. $I$ represents the identity matrix in $\mathbb{R}^{p\times p}$. Parameters with no value assigned are hyper-parameters. AR stands for arbitrary rectangle-range.}
\label{model_set}
\centering
\begin{tabular}{lccccc}
\toprule 
 Method& Abbreviation& $\lambda_n^{(1)}$& $\lambda_n^{(2)}$ & $\mathrm w_n$ & $\mathrm\Sigma_n$ \tabularnewline
\midrule 
AR least squares& ARLS & $0$ & $0$ & $1/p$ & $I$\tabularnewline
AR lasso& ARL &  & $0$ & $1/p$ & $I$\tabularnewline
AR generalized lasso& ARGL &  & $0$ &  & $I$\tabularnewline
AR ridge &ARR & $0$ &  & $1/p$ & $I$\tabularnewline
AR generalized ridge& ARGR & $0$ &  & $1/p$ & \tabularnewline
AR elastic net& AREN &  &  & $1/p$ & $I$\tabularnewline
AR lasso generalized elastic net& ARLEN &  &  &  & $I$\tabularnewline
AR ridge generalized elastic net& ARREN &  &  & $1/p$ & \tabularnewline
AR generalized elastic net& ARGEN &  &  &  & \tabularnewline
\bottomrule 
\end{tabular}
\end{table}

To further speed up the tuning process, we select the following values for each of the  potential hyper-parameters to tune on. $\lambda_n^{(1)},~\lambda_n^{(2)}$ take integer values in the sets $\Lambda^{(1)}$ and $\Lambda^{(2)}$, respectively, where for each $i=1,2$,
$
\Lambda^{(i)}=\big\{ 0,~1,\ldots,\lambda_{up}^{(i)}\big\}~\mbox{for some}~\lambda_{up}^{(i)}\in \mathbb{Z_+}.
$
The weight vector $\mathrm w_n$ takes values in
$$
 W=\left\{\frac{(w_1~\ldots~w_p)'}{\sum_{i=1}^p w_i}: w_1,\ldots,w_p\in\left\{0,\cdots,w_{up} \right\}\right\},~\mbox{for some}~w_{up}\in\mathbb{Z^+}.
$$ 
The matrix $\mathrm\Sigma_n$ can be decomposed to $\Sigma_n=PDP'$ with orthogonal matrix $P$ and nonnegative diagonal matrix $D$. Therefore, values of $\mathrm\Sigma_n$ are considered in
$$
\Sigma=\left\{PDP': D=\diag(d_1,\cdots,d_p), ~ d_1,\ldots,d_p\in\left\{0,\cdots,d_{up}\right\} \right\},
$$
for some $d_{up}\in\mathbb{Z^+}$ and orthogonal matrix $P$. The cardinality of $\Lambda^{(1)}$, $\Lambda^{(2)}$, $W$, and $\Sigma$ are $\lambda_{up}^{(1)}+1$, $\lambda_{up}^{(2)}+1$, $(w_{up}+1)^p$, and $(d_{up}+1)^p$, respectively, thus we obtain the total values on the parameter grid for each method listed in Table \ref{model_tune}.

To improve the performance of the methods, we can: (1) randomly choose more trials, that is, increase  $N_{calls}$, to cover more values on the grid, since theoretically searching on the whole grid will give the best result; (2) increase the values of $w_{up}$ and $d_{up}$, but at the same time $N_{calls}$ also needs to be increased since higher values of $w_{up}$ and $d_{up}$ will exponentially enlarge the grid.

\begin{table}[H]
\caption{Tuning grid for different methods.}
\label{model_tune}
\centering
\begin{tabular}{lcc}
\toprule 
 Methods& Tuning grid & Total number of values on grid\tabularnewline
\midrule 
 ARLS & None&0 \tabularnewline
 ARL& $\Lambda^{(1)}$ &$\lambda_{up}^{(1)}+1$  \tabularnewline
ARGL& $\Lambda^{(1)}\times W$& $(\lambda_{up}^{(1)}+1)(w_{up}+1)^p$ \tabularnewline
ARR& $\Lambda^{(2)}$&$\lambda_{up}^{(2)}+1$ \tabularnewline
ARGR& $\Lambda^{(2)}\times\Sigma$& $(\lambda_{up}^{(2)}+1)(d_{up}+1)^p$\tabularnewline
AREN& $\Lambda^{(1)}\times\Lambda^{(2)}$&$(\lambda_{up}^{(1)}+1)(\lambda_{up}^{(2)}+1)$ \tabularnewline
ARLEN& $\Lambda^{(1)}\times\Lambda^{(2)}\times W$&$(\lambda_{up}^{(1)}+1)(\lambda_{up}^{(2)}+1)(w_{up}+1)^p$ \tabularnewline
ARREN& $\Lambda^{(1)}\times\Lambda^{(2)}\times\Sigma$&$(\lambda_{up}^{(1)}+1)(\lambda_{up}^{(2)}+1)(d_{up}+1)^p$ \tabularnewline
 ARGEN& $\Lambda^{(1)}\times\Lambda^{(2)}\times W\times\Sigma$&$(\lambda_{up}^{(1)}+1)(\lambda_{up}^{(2)}+1)(w_{up}+1)^p(d_{up}+1)^p$ \tabularnewline
\bottomrule 
\end{tabular}
\end{table}

\section{Simulations}
\label{sec:simulation}

\subsection{Signal Recovery}

The purpose of the  signal recovery examples below is to explore the best possible performance of ARGEN, and to show ARGEN's ability to \enquote{reduce noise} and deal with  high-dimensional ($p\gg n$) sparse signals.

First, we conduct the same problem as in \cite{mohammadi} to compare our results with theirs. In the following, we briefly outline the problem. A sparse signal $\beta^*\in\mathbb R^{4096}$ with 160 spikes that have amplitude $1$ is generated and plotted in Figure $\ref{signal_simple}$ (top), which is the true regression coefficient vector. The design matrix $X\in\mathbb R^{1024\times4096}$ is generated with each entry sampled from i.i.d. standard normal distribution and each pair of rows orthogonalized. The response vector $Y\in\mathbb R^{1024}$ is then generated through $Y=X\beta^*+\epsilon$, where $\epsilon\in R^{1024}$ is a vector of i.i.d Gaussian noise with zero mean and variance 0.1. The lower and upper bounds of $\widehat \beta$ are $-1$ and $1$, respectively. Given $\lambda_n^{(1)}=10,  \lambda_n^{(2)}=0,$ and $w_i=0 $ if $\beta_i\neq 0 $ for $i=1,\ldots, 4096$, we obtain the recovery signal $\widehat \beta$ and its difference from the true signal in the middle and bottom plots in Figure $\ref{signal_simple}$. As a result, ARGEN achieves a lower MSE of $0.00069$ compared with the MSE of $0.00273$ in \cite{mohammadi}. 

To show that ARGEN can deal with more complicated problems, we take another signal recovering problem as example. We follow the same settings as in the previous problem, but this time replace the amplitude of each spike by a random value generated from a uniform distribution over $[0, 1)$. The corresponding true and recovery signals and their difference are plotted in Figure $\ref{signal_arbitrary}$. The MSE obtained by ARGEN is $0.00166$.

\begin{figure}[!tbh]
\centering
\begin{minipage}[b]{0.49\textwidth}
\includegraphics[width=\textwidth]{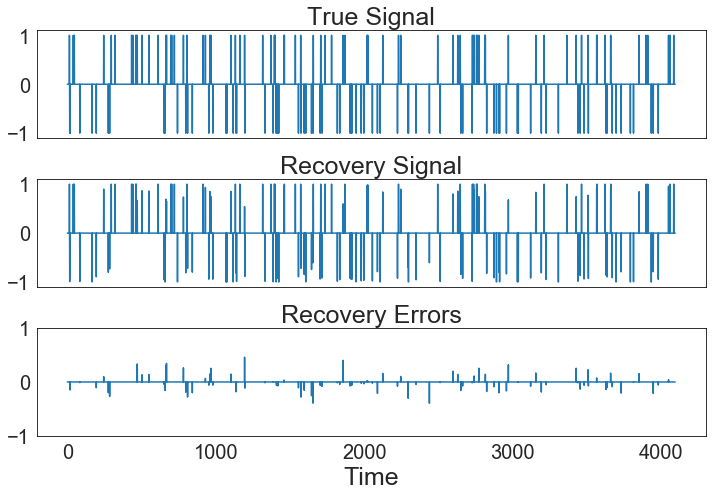}
\caption{Signal recovery with equal-length spikes. MSE $= 0.00069$.}
\label{signal_simple}
\end{minipage}
  \hfill
  \begin{minipage}[b]{0.49\textwidth}
\includegraphics[width=\textwidth]{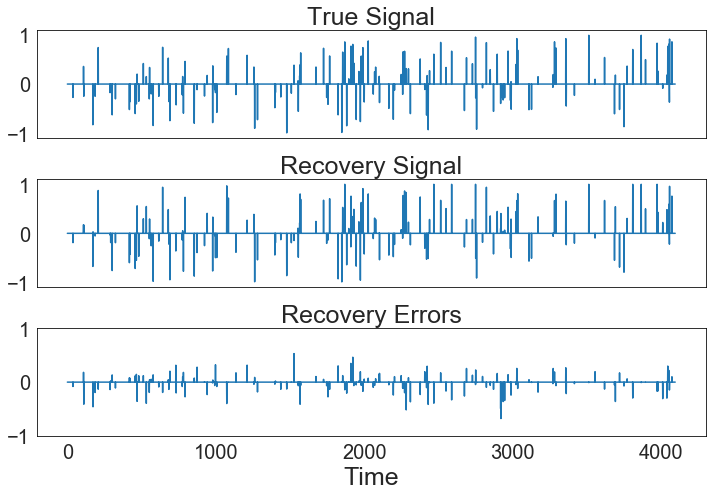}
\caption{Signal recovery with arbitrary-length spikes. MSE $= 0.00166$.}
\label{signal_arbitrary}
\end{minipage}
\end{figure} 

\subsection{Methods Comparison}
In this section, we compare the performances of the methods listed in Table \ref{model_set}. We adopt the following setup to tune the four hyper-parameters: $\lambda_{up}^{(1)}=100,~ \lambda_{up}^{(2)}=100,~w_{up}=2,~d_{up}=2$. Considering the computational cost, the number of random values ($N_{calls}$) on grid to try is $100$ for ARL and ARR, $500$ for AREN, $1280$ for ARGL and ARGR, $2560$ for ARLEN and ARREN, and $6554$ for ARGEN.   

We conduct eight examples to test the performance of each method. In each example, we simulate 50 datasets from $
Y=X\beta^*+\epsilon, \epsilon\sim \mathcal N(0,\sigma^2),
$ and each of the data sets consists of independent training, validation, and testing sets. We use the training set to fit models. Parameters are tuned on the validation set. The test error, measured by the MSE (\ref{me}), will be computed on the testing set. In the following, we outline these examples.

In Example 1, let $\beta^*=(3~1.5~0~0~2~0~0~0)'$, $p=8$, $\sigma=3$, and the pairwise correlation between $X_i$ and $X_j$ be $0.5^{|i-j|}$ for all $i,j$. We use $20$ observations for training, $20$ for validation, and $200$ for testing.

Example 2 is the same as Example 1, except that each entry of $\beta^*$ is replaced with $0.85$.

In Example 3, let $\sigma=15$, $p=40$,
$$
\beta^*=(\underbrace{0~\cdots~0}_\textrm{10~times}~\underbrace{2~\cdots~2}_\textrm{10~times}~\underbrace{0~\cdots~0}_\textrm{10~times}~\underbrace{2~\cdots~2}_\textrm{10~times})',
$$ 
and the pairwise correlation between $X_i$ and $X_j$ be $0.5$ for all $i,j$. We use $100$ observations for training, $100$ for validation, and $400$ for testing.

In Example 4, let $\sigma=15$, $p=15$,
$$
\beta^*=(\underbrace{3~\cdots~3}_\textrm{6 times}~\underbrace{0~\cdots~0}_\textrm{9 times})'.
$$
Let the design matrix $X$ be generated as:
\begin{eqnarray*}
&&x_i=Z_1+\epsilon_i^x, ~~Z_1\sim \mathcal N(0,1), ~~i=1,2,\\
&&x_i=Z_2+\epsilon_i^x, ~~Z_2\sim \mathcal N(0,1), ~~i=3,4,\\
&&x_i=Z_3+\epsilon_i^x, ~~Z_3\sim \mathcal N(0,1), ~~i=5,6,\\
&&\mbox{where}~\epsilon_i^x~ \mbox{are i.i.d.}~ \mathcal N(0,0.01),~ i=1,\cdots,6. \\
&&x_i\sim N(0,1), ~~x_i \mbox{~are i.i.d.}, ~~i=7,\cdots,15.
\end{eqnarray*}
We use $40$ observations for training, $40$ for validation, and $100$ for testing.

Example 5 is the same as Example 1, except that  $\beta^*=(-3~-1.5~0~0~2~0~0~0)'$ and $\beta_i\geq -1000$ for all $i$. 

Example 6 is the same as Example 1, except that each entry of  $\beta^*$ is replaced with a random generated number in $[-5,5]$ and this values is used  for all the 50 data sets. Besides, we restrict $\beta_i\in [-5,5]$ for all $i$.

Example 7 is the same as Example 1, but uses $\beta^*=(-6~-8~0~0~7~0~0~0)'$ and restricts $\beta_i\in [-5,5]$ for all $i$.

Example 8 is the same as Example 4, but uses $5$ observations for training, $5$ for validation, and $50$ for testing. Beside, we restrict $\beta_i\geq -1000$ for all $i$ and use $$\beta^*=(\underbrace{-3~\cdots~-3}_\textrm{6 times}~\underbrace{0~\cdots~0}_\textrm{9 times})'.$$
 
The first three examples above are from \cite{zouhas} and \cite{tibshirani}, which are originally constructed for lasso. The fourth example is similar to that in \cite{zouhas}, which creates a grouped variable situation. None of the first four examples, however, requires constraints on lower or upper bound for the coefficients. To show and test that ARGEN is applicable to more general and complicated problems, we add four more examples, which are Examples 5 to 8. In each of  Examples 5, 6, and 8, constraints are added and include the true coefficients. In Example 7, we provide a case when the true coefficients are out of the interval constraints. The values $1000$ and $5$ were chosen arbitrarily to illustrate the model's ability to work with constrained coefficients. Moreover, another purpose of introducing the last example is to test model performance on high-dimensional $(p\geq n)$ scenarios.

\begin{table}[!tbh]
\caption{Median MSE and the corresponding standard error (given in the parentheses) over 50 replications for each method and example.}
\label{model_result}
\centering
\begin{tabular}{lcccc}
\toprule 
\multirow{2}{*}{Models} & Example 1& Example 2 &Example 3 & Example 4  \tabularnewline
\cmidrule{2-5}
&MSE (SE)&MSE (SE)&MSE (SE)&MSE (SE)\tabularnewline
\midrule
ARLS & $2.28~ (0.31)$ & $3.30~(0.27)$ & $43.34~(1.53)$ &$159.47~(6.23)$\tabularnewline
ARL & $1.55~ (0.26)$ & $2.61~(0.21)$ & $41.72~(1.43)$ &$130.98~(3.91)$\tabularnewline
ARR & $1.73~ (0.25)$ & $1.31~(0.15)$ & $19.55~(0.72)$ &$111.63~(2.23)$\tabularnewline
AREN & $1.53~ (0.29)$ & $1.43~(0.16)$ & $19.58~(0.73)$ &$109.44~(2.28)$\tabularnewline
ARGL & $0.54~ (0.17)$ & $1.39~(0.13)$ & $42.87~(1.52)$ &$155.69~(5.70)$\tabularnewline
ARGR & $0.71~ (0.13)$ & $0.92~(0.09)$ & $39.12~(1.36)$ &$138.12~(3.81)$\tabularnewline
ARLEN & $0.62~ (0.19)$ & $1.22~(0.14)$ & $19.45~(0.72)$ &$111.47~(2.25)$\tabularnewline
ARREN & $0.49~ (0.14)$ & $1.01~(0.11)$ & $37.50~(1.31)$ &$124.11~(2.91)$\tabularnewline
ARGEN & $0.20~ (0.11)$ & $0.65~(0.11)$ & $38.25~(1.38)$ &$136.00~(3.41)$\tabularnewline
\bottomrule 
\toprule 
\multirow{2}{*}{Models} & Example 5& Example 6 &Example 7 & Example 8\tabularnewline
\cmidrule{2-5}
&MSE (SE)&MSE (SE)&MSE (SE)&MSE (SE)\tabularnewline
\midrule
ARLS & $5.81~ (0.45)$ & $3.92~(0.34)$ & $18.52~(0.82)$ &$10^7~(5\times10^5)$\tabularnewline
ARL & $2.61~ (0.31)$ & $3.38~(0.29)$ & $15.42~(0.48)$ &$117.02~(7.53)$\tabularnewline
AREN & $2.63~ (0.30)$ & $4.58~(0.71)$ & $15.72~(0.48)$ &$96.75~(5.36)$\tabularnewline
ARR & $3.04~ (0.30)$ & $3.98~(0.40)$ & $17.10~(0.66)$ &$98.49~(5.62)$\tabularnewline
ARGL & $1.14~ (0.20)$ & $1.78~(0.22)$ & $15.53~(0.35)$ &$357.39~(65.14)$\tabularnewline
ARGR & $1.51~ (0.24)$ & $1.30~(0.22)$ & $14.80~(0.42)$ &$189.69~(30.35)$\tabularnewline
ARLEN & $1.19~ (0.23)$ & $2.36~(0.25)$ & $15.52~(0.33)$ &$94.43~(8.39)$\tabularnewline
ARREN & $1.15~ (0.26)$ & $1.70~(0.23)$ & $14.67~(0.32)$ &$94.73~(13.07)$\tabularnewline
ARGEN & $0.72~ (0.14)$ & $1.25~(0.19)$ & $14.41~(0.31)$ &$152.57~(91.40)$\tabularnewline
\bottomrule 
\end{tabular}
\end{table}

Table \ref{model_result} summarizes the Median MSE and its corresponding standard error over 50 data sets using each method in Table \ref{model_set} for each of the above eight examples. The MSEs of examples with different $\sigma$ are not comparable because they are simulated with different noise variances. Some of our examples do, however, share a similar simulation process and their MSEs are at the same level. For instance, Examples 1 and 5 are similar except that Example 5 has a lower limit of $-1000$ on the coefficients, whereas Example 1 has no limit. As a result of these lower constraints, Example 5 tends to force coefficients above the lower limit, resulting in relatively higher MSEs than Example 1. In addition to cross-example comparisons, it would make more sense to compare the MSEs across methods for each example. The overall performance of methods consisting of more parameters is better than those with fewer parameters. More specifically, the ARGL, ARGR, ARLEN, ARREN, and ARGEN are, in most cases, outperforming the ARLS, ARL, ARR, and AREN. For instance, ARGEN, the most complicated method that includes all four hyper-parameters, performs best in Examples 1, 2, 5, 6, and 7. ARLEN and ARREN, the second from the top regard to complicity, provide second high accuracy in Examples 1, 2, 3, 4, 5, 7, and 8. ARGL and ARGR are at the third level of performance in Example 1, 2, 5, 6, and 7. However, in Table \ref{model_tune}, performances are not always increasing as the model gets more complicated. This is because the ratio of values searched ($N_{calls}$) to the total number of values on the grid is not the same for all the methods, due to the exponential increase of the size of the grid as more hyper-parameters are included. It is also because we keep the same $N_{calls}$ in each method for all the examples, which, in fact, have different dimensionality. Therefore, the performance of methods like ARLEN, ARREN, and ARGEN is worse than expected in some of the examples. 

\section{Real World Application - S\&P 500 Index Tracking}
\label{sec:real_world}
\subsection{Outline}
Index tracking is passive management that  replicates the return of a market index (e.g., S\&P 500  in  New York  and FTSE 100  in  London) by constructing an equity portfolio that contains only a subset of the index constituents to reduce the transaction and management costs \citep{connor,franks,jacobs,jobst,larsen,lobo,toy}.

In this section, we show how ARGEN applies to index tracking, an asset allocation \citep{markowitz} problem with allocation constraints, in the financial field and compare the results with those of nonnegative lasso \citep{wu_nl} and nonnegative elastic net \citep{wu_nen}. Through this example, (1) we provide general practice guidance for adapting ARGEN to solve real world problems; (2) We   demonstrate ARGEN's feasibility and flexibility compared to the existing methods. In particular, we highlight that ARGEN can deal with problems that require constraints on the coefficient, while none of the existing methods \citep{wu_nl,wu_nen} can.

In portfolio management, \enquote{how close is the constructed tracking portfolio's return compared to that of benchmark index} is a primary measurement for accessing portfolio performance for passive strategies such as index-tracking strategy. Hence, inspired by \cite{Santanna}, we evaluate the tracking portfolio performance from the following three perspectives. Our primary performance measurement is tracking error (TE),
$$
TE = \sqrt{\frac{\sum\limits_{t=1}^{T}\left( (r_{t}^{p} - r_{t}^{b}) - \mathbb E[r_{t}^{p} - r_{t}^{b}]\right)^2}{T}}
$$
to measure the volatility of the excess return of a portfolio to the corresponding benchmark. We also compute the annual volatility of portfolio return (ARV),
$$
ARV = \sqrt{252} \sqrt{\frac{\sum\limits_{t=1}^{T}\left(r_{t}^{p} - \mathbb E[r_{t}^{p}]\right)^2}{T}}
$$
to measure the annualized return volatility of a portfolio. In addition, we also report the cumulative return 
$$
CR = \prod_{t=1}^{T} (1+r_{t}^{p}) - 1
$$
of the construction portfolios in our study. Here $r_{t}^{p}$ denotes the portfolio return at time $t$, $r_{t}^{b}$ is the benchmark return at time $t$, and $T$ is the total number of periods. 

Because there is no guarantee that the normalized $\widehat\beta$ is still less than $t$, we introduce the following normalization process, which constrains the way of choosing the lower and upper limits of $\mathcal I$. Recall that $s_i$ and $t_i$ are the lower and upper bounds of the coefficient $\beta_i$. To guarantee that the portfolio weight (i.e., normalized $\beta_i$), denoted by $\tilde{\beta}_i$,  for stock $i$ satisfies $0 \le \tilde{\beta}_i \le t_i \le 1$, we need $s_i$ and $t_i$ satisfy the following:
\begin{equation*}
    t_i + \sum_{j\in\{1,\ldots,p\}\backslash\{i\}} s_j \ge 1,
\end{equation*}
because it yields
\begin{equation*}
\tilde{\beta}_i = \frac{\beta_i}{\sum_{j=1}^{p} \beta_j} \le \frac{t_i}{t_i+\sum_{j\in\{1,\ldots,p\}\backslash\{i\}} s_j}\le t_i.
\end{equation*}
In the special case of $s_i = s_0$ and $t_i = t_0$, we shall choose the lower and upper bounds through 
\begin{equation*}
    \frac{1-t_0}{p-1} \le s_0 \le t_0 \le 1.
\end{equation*}

We use 5-year (from February 19, 2016 to February 18, 2020) historical daily prices (1259 data points) of S\&P 500\footnote{Retrieved from finance.yahoo.com} as our benchmark index and those of the constituent equities\footnote{Retrieved from quandl.com}. Because the list of S\&P 500 constituents is updated regularly by S\&P Dow Jones Indices LLC, we only include the daily prices from 377 stocks that have not been changed during the period of interest. In the linear model (\ref{1}), $Y$ is the vector of the daily return of the S\&P 500 index and columns of $X$ are the daily returns of the 391 stocks. To follow a buy-and-hold investment strategy, we split the data into training, validation, and testing sets. The training and validation sets consist of the first $252$ data points (12 months), 20\% of which are in the validation set. The remaining $1006$ data points are referred to as the testing set. In addition, we construct a long-only portfolio by ensuring the lower bound $s$ assumes only nonnegative values.

In the following, we outline our procedure of the index tracking problem. First, we target selecting $N$ individual stocks to construct the tracking portfolio. The number $N$ is among the normal range of the number of stocks hold to remove risk exposure and avoid unnecessary transaction costs. In other words, we constrain the number of nonzero elements in $\widehat{\beta}$ to $N$. Thus we use the bisection search \citep{wu_nen} in Algorithm \ref{algo:bisection-search-lam_1}  to determine the optimal $\lambda_n^{(1)}$ that produce the right number of nonzero coefficients, given $N=30,~50,~70,~90$ respectively, $\lambda_n^{(2)}=0$, $\mathrm{w}_n$ has equal elements, and $\mathcal I=[0,+\infty)$, respectively. Hence we obtain the 50 stocks selected by the model. This first process proceeds on the training and validation sets.

\begin{algorithm}[h!]
\LinesNumbered 
\KwIn{ $\lambda_n^{(2)},\mathrm w_n, \Sigma_n, \mathcal I, N$.}
 \textbf{Initialization:} $\lambda_{\text{down}} \longleftarrow 0$; $\lambda_{\text{up}} \longleftarrow 1$\;
 $\lambda \longleftarrow (\lambda_{\text{up}} + \lambda_{\text{down}})/{2}$\;
    $\widehat{\beta} \longleftarrow\argmin_{\beta \in\mathcal I}\left( \left\| Y - X\beta \right\|_2^2+\lambda  \mathrm w_{n}'|\beta|+\lambda_n^{(2)}\beta'\mathrm\Sigma_n\beta\right)$\;
 
 \While{ $\|\widehat{\beta}\|_0 \neq N$ }{
 
  \eIf{ $\|\widehat{\beta}\|_0>N$ } {
       $\lambda_{\text{down}} \longleftarrow \lambda$\;
       $\lambda \longleftarrow (\lambda_{\text{up}} + \lambda_{\text{down}})/{2}$\;
    } {
       $\lambda_{\text{up}} \longleftarrow \lambda$\;
       $\lambda \longleftarrow (\lambda_{\text{up}} + \lambda_{\text{down}})/{2}$\;
  }
  $\widehat{\beta} \longleftarrow\argmin_{\beta \in\mathcal I}\left( \left\| Y - X\beta \right\|_2^2+\lambda  \mathrm w_{n}'|\beta|+\lambda_n^{(2)}\beta'\Sigma_n\beta\right)$\;
}
 \KwOut{$\lambda$ }
\caption{Bisection search for $\lambda_n^{(1)}$}
\label{algo:bisection-search-lam_1}
\end{algorithm}

Next, we consider $\mathcal I = [0.0082, 0.6]$ and $\mathcal I = [0.0041, 0.8]$, respectively, and  apply ARLS and ARGEN to the corresponding data set of the selected 30, 50, 70, and 90 stocks to experiment the ARGEN algorithm. The ARLS is viewed as the baseline. For ARGEN, we search $\lambda_n^{(1)}$ randomly in a range of $(10^{-8}, 5\times10^{-2})$, which is a smaller searching grid compared with that in Section \ref{Hyper_parameter_tuning}, since larger range results in over 50 vanishing coefficients. The $\lambda_n^{(2)}$ is randomly searched in a range of $(10^{-8}, 10^2)$. We take $w_{up}=1$, and $d_{up}=1$. The hyper-parameter tinning process is conducted in Optuna hyper-parameter optimization framework \cite{DBLP:journals/corr/abs-1907-10902}, and selected the parameter set that has the lowest validation score, measured by MSE, compared with that of ARLS, and then apply it on testing data set to evaluate and compare the out-of-sample performances between the portfolios obtained using ARGEN and ARLS. 

\subsection{Experimental Results}

We follow the procedure as elaborated in the previous session to construct multiple ARGEN and ARLS portfolios with different numbers of stocks, and different numbers of hyper-parameter tuning trails. The portfolios' testing performance is summarized in Table \ref{experiment_result_different_stock_num}.

Particularly, Table \ref{experiment_result_different_stock_num} illustrates the performance of different ARGEN and ARLS portfolios constructed with different coefficient boundary and stock numbers. Across different portfolio construction configurations, the ARGEN portfolios tend to have lower tracking errors and annualized return volatility than ARLS portfolios, while satisfying the coefficient boundary conditions. Even though ARGEN portfolios tends to have lower cumulative returns, but they are comparable with the S$\&$P 500 index cumulative return during the same period, except for 30-stock ARGEN portfolios. Portfolios with a wider range of constraints ($[0.0041, 0.8]$) tracks better than portfolios with narrower constraints range ($[0.0082, 0.6]$), which is expected behavior. Another expected behavior we can observe from the results is that as we increase the number of stocks in portfolios, the tracking errors decrease.

\begin{table}[H]
\caption{ARGEN vs ARLS tracking portfolio performance in the testing period. All the parameter sets are achieved the best score in the validation period using a relatively large number (30000) of hyper-parameter tuning tails. S$\&$P 500 index realizes a cumulative return (CR) as $66.4697\%$ and an annualized return volatility (ARV) as $20.6233\%$ in the testing period.}
\label{experiment_result_different_stock_num}
\centering
\begin{tabular}{ccccc|ccc}
\toprule 
 $\mathcal I$   & \# of &  & ARGEN &  &  & ARLS &  \tabularnewline
\cmidrule{3-8} 
  & Stocks & TE & ARV & CR & TE & ARV & CR \tabularnewline
\midrule
 $[0.0082, 0.6]$ & 30 & $2.20\%$ & $28.15\%$ & $56.70\%$ & $2.25\%$ & $29.14\%$ & $85.36\%$ \tabularnewline
 & 50 & $2.01\%$	& $24.34\%$ &	$97.70\%$ & $2.04\%$ & $24.93\%$ & $108.62\%$ \tabularnewline
  & 70 & $2.03\%$	& $24.80\%$ &	$94.86\%$ & $2.09\%$ & $25.96\%$ & $95.36\%$ \tabularnewline
  & 90 & $2.03\%$	& $24.69\%$ &	$83.34\%$ & $2.07\%$ & $25.59\%$ & $85.62\%$ \tabularnewline
\cmidrule{1-8} 
$[0.0041, 0.8]$& 30 & $2.24\%$	& $28.87\%$ &	$54.14\%$ & $2.23\%$ & $28.80\%$ & $80.58\%$ \tabularnewline
  & 50 & $1.82\%$	& $20.31\%$ &	$66.69\%$ & $2.01\%$ & $24.30\%$ & $112.86\%$ \tabularnewline
  & 70 & $1.93\%$	& $22.56\%$ &	$114.65\%$ & $1.97\%$ & $23.49\%$ & $110.33\%$ \tabularnewline
  & 90 & $1.93\%$	& $22.58\%$ &	$98.33\%$ & $1.95\%$ & $23.00\%$ & $95.51\%$ \tabularnewline
 \bottomrule 
\end{tabular}
\end{table}

\section{Conclusion and Future Perspectives}
\label{sec:conclusion}
In this paper, we propose the ARGEN for variable selection and regularization. ARGEN linearly combines generalized lasso and ridge penalties, which are $\mathrm{w}_n'\beta$ and $\beta'\Sigma_n\beta$, and it allows arbitrary lower and upper constraints on the coefficients. Many well-known methods including (nonnegative) lasso, ridge, and (nonnegative) elastic net are particular cases of ARGEN. We show that ARGEN has variable selection and estimation consistencies subject to some conditions.  We propose an algorithm to solve the ARGEN problem by applying multiplicative updates to a quadratic programming problem with a rectangle range and weighted $l_1$ regularizer (MU-QP-RR-W-$l_1$). The algorithm is implemented as a Python library through the PyPi server. The simulations and the application in index-tracking present shreds of evidence that ARGEN usually outperforms other methods discussed in the paper due to its flexibility and adaptability for problems with a small to moderate amount of predictors. In problems with a huge amount of predictors, although ARGEN should perform best theoretically, the cost might be high. In this situation ARLEN and ARREN might be better choices. We refer readers to the Github repository
\footnote{\url{https://github.com/songzhm/arbitraryElasticNet}} for full access to the code for the simulation and application parts.

Although in the paper the ARGEN penalty is added to linear models, there are possibilities that it is applied to other loss functions to improve their performances as directions of future research. Motivated by the index tracking problem, a constraint that guarantees the sum of weights equals one may be considered as another direction. Asymptotic behavior in law of the ARGEN estimator remains unknown. Most importantly, a more efficient tuning process is urgently required to apply ARGEN to solve more complicated problems. All of the above problems are open for future study.

\begin{appendix}
\section{Proof of Theorem \ref{thm:VSC}}
\label{proof_thm:VSC}
Without loss of generality, we assume that $\big(S_{(1-)}, S_{(1+)}, S_{(2)}, \ldots, S_{(6)}\big) =$ $(1, \ldots$ $, p)$ to simplify the notations. In addition, we follow the notations in Section \ref{sec:variable_selection}. In order to show that ARGEN admits variable selection consistency, we first need to show that the following result holds. 
\begin{lemma}
For $j\in\{1-,1+,2,\ldots,6\}$,
\label{prop}
\begin{equation}
\label{S_hat_low_bound}
    \mathbb P\Big(\widehat{S}_{(j)}\big(\lambda_n^{(1)},\lambda_n^{(2)},\mathrm w_n,\mathrm\Sigma_n\big) = S_{(j)}\Big|S_{(j)}\ne\emptyset\Big) \ge \mathbb P\left(\mathcal{E}(V_{(j)})\right),
\end{equation}
	for the events
		\begin{equation}
	\label{MU}
	\begin{cases}
	    \mathcal E(V_{(1-)}):=\left\{\rho( t_{(1-)}\wedge0)< V_{(1-)}< \rho( s_{(1-)}\wedge0)\right\},\\
	    \mathcal E(V_{(1+)}):=\left\{\rho( t_{(1+)}\vee0)< V_{(1+)}< \rho( s_{(1+)}\vee0)\right\},\\
	      \mathcal E(V_{(2)}):=\left\{V_{(2)}\le 0\right\},~\mathcal E(V_{(3)}):=\left\{V_{(3)}\ge 0\right\},\\
	      \mathcal E(V_{(4)}):=\left\{V_{(4)}\le 0\right\},~\mathcal E(V_{(5)}):=\left\{V_{(5)}\ge 0\right\},\\
	      \mathcal E(V_{(6)}):=\left\{-\mathrm w_{n,(6)}\le V_{(6)}\le \mathrm w_{n,(6)}\right\},
	\end{cases}
	\end{equation}
	where
	\begin{eqnarray}
&&V:=(V_1~\ldots~V_p)=(V_{(1)}~\ldots~V_{(6)}),\nonumber\\
	\label{Ui}
&& V_{(1)}\hspace{-0.1cm}:=\hspace{-0.1cm}\frac{1}{n} \Big(C_{11}+\frac{\lambda_n^{(2)}}{n}\Sigma_{n,(11)}\Big)^{-1}\hspace{-0.1cm}\Big(-X_{(1)}'\epsilon+\frac{\lambda_n^{(1)}}{2}
\begin{pmatrix}
-\mathrm w_{n,(1-)}\\
\mathrm w_{n,(1+)}
\end{pmatrix}+\lambda_n^{(2)}\left(\Sigma_{n,(12)} s_{(2)}+\Sigma_{n,(13)} t_{(3)}\right)\Big),\\
\label{V1-}
&&V_{(1-)}:=(V_i)_{i\in S_{(1-)}},~V_{(1+)}:=(V_i)_{i\in S_{(1+)}},\\
\label{Vj}
 &&V_{(j)}:=T_{(j)}-D_{(j)}\mathrm w_{n,(j)},~\mbox{for}~j=2,\ldots,6,
\end{eqnarray}
	with 
\begin{equation}
\begin{aligned}
\label{def:Tj}
&T_{(j)}:=\frac{2}{n}\left(X_{(j)}'X_{(1)}+\lambda_n^{(2)}\Sigma_{n,(j1)}\right)\Big(C_{11}+\frac{\lambda_n^{(2)}}{n}\Sigma_{n,(11)}\Big)^{-1}\\
&\times\Big(\frac{\lambda_n^{(2)}}{\lambda_n^{(1)}}\Sigma_{n,(11)}\beta_{(1)}^*-\frac{X_{(1)}'\epsilon}{\lambda_n^{(1)}}+\frac{1}{2}\diag(\sign(\beta_{(1)}^*))\mathrm w_{n,(1)}+\frac{\lambda_n^{(2)}}{\lambda_n^{(1)}}\left(\Sigma_{n,(12)} s_{(2)}+\Sigma_{n,(13)} t_{(3)}\right)\Big)\\
&-\frac{2\lambda_n^{(2)}}{\lambda_n^{(1)}}\Sigma_{n,(j1)}\beta_{(1)}^*+\frac{2}{\lambda_n^{(1)}}X_{(j)}'\epsilon-\frac{2\lambda_n^{(2)}}{\lambda_n^{(1)}}(\Sigma_{n,(j2)} s_{(2)}+\Sigma_{n,(j3)} t_{(3)}),
\end{aligned}
\end{equation}
\begin{eqnarray}
\label{D}
&&(D_{(2)}~\ldots~D_{(6)}):=(\diag(\sign( s_{(2)}))~\diag(\sign( t_{(3)}))~1~-1~0),\\
&& \rho = \Big(C_{11}+\frac{\lambda_n^{(2)}}{n}\Sigma_{n,(11)}\Big)^{-1}C_{11}\beta_{(1)}^*\in\mathbb R^{\# S_{(1)}},\nonumber\\
    &&\rho( u):=\left(\rho_{i}\right)_{i\in S_{(j)}}- u_{(j)},~~\mbox{for}~ u_{(j)}=(u_i)_{i\in S_{(j)}}~\mbox{and}~u\in\mathbb R^{p}\nonumber.
\end{eqnarray}
\end{lemma}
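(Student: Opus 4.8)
The plan is to run a primal--dual (KKT) witness argument tailored to the box constraint $\mathcal I=[s,t]$ and the seven-group decomposition. Since the ARGEN objective in (\ref{model}) is convex in $\beta$ and $\mathcal I$ is convex, the Karush--Kuhn--Tucker conditions are sufficient for a feasible point to be the global minimizer $\widehat\beta$. First I would build an \emph{oracle} estimator $\bar\beta$ carrying the correct group structure: set $\bar\beta_i=s_i$ on $S_{(2)}\cup S_{(4)}$, $\bar\beta_i=t_i$ on $S_{(3)}\cup S_{(5)}$, and $\bar\beta_i=0$ on $S_{(6)}$, and let $\bar\beta_{(1)}$ solve the reduced stationarity equation obtained by differentiating the objective in the group-$1$ coordinates---where the $l_1$ term is locally linear with signs $\sign(\beta_{(1)}^*)$---after substituting the frozen boundary values. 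With $Y=X\beta^*+\epsilon$ and the block decomposition of $C$ in (\ref{def:C}), this equation is linear in $\bar\beta_{(1)}$ and uniquely solvable because $C_{11}+\tfrac{\lambda_n^{(2)}}{n}\Sigma_{n,(11)}$ is positive definite, with closed-form solution $\bar\beta_{(1)}=\rho-V_{(1)}$ for $\rho,V_{(1)}$ as in (\ref{Ui}).

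Second, I would read off the group-by-group optimality conditions for $\bar\beta$ and match them to the events in (\ref{MU}). For an index in group $1-$ or $1+$ the interior stationarity holds by construction, so the only requirement is that the corresponding coordinate of $\bar\beta_{(1)}=\rho-V_{(1)}$ genuinely lands in the open sub-interval fixing its sign class; rewriting $s_i<\bar\beta_i<t_i\wedge0$ (resp.\ $s_i\vee0<\bar\beta_i<t_i$) in terms of $V_{(1)}$ produces exactly $\mathcal E(V_{(1-)})$ (resp.\ $\mathcal E(V_{(1+)})$), which explains the thresholds $\rho(t_{(1)}\wedge0)$, $\rho(s_{(1)}\wedge0)$ and their $\vee0$ counterparts, and simultaneously enforces $\sign(\bar\beta_i)=\sign(\beta_i^*)$. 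For an active-bound index in groups $2$--$5$ the optimality condition is the one-sided inequality requiring the directional derivative of the objective to be nonnegative when moving into $\mathcal I$ from a lower bound and nonpositive from an upper bound; after dividing by $\lambda_n^{(1)}$ and collecting terms these reduce to $V_{(j)}\le0$ or $V_{(j)}\ge0$, i.e.\ to $\mathcal E(V_{(2)}),\dots,\mathcal E(V_{(5)})$, with the matrices $D_{(j)}$ recording the sign of the binding bound. For a group-$6$ index, where $\beta_i^*=0$ sits in the interior, the condition is the two-sided subgradient bound, which collapses to $-\mathrm w_{n,(6)}\le V_{(6)}\le\mathrm w_{n,(6)}$, i.e.\ $\mathcal E(V_{(6)})$.

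The technical heart is verifying that $T_{(j)}$ in (\ref{def:Tj}) is precisely $2/\lambda_n^{(1)}$ times the group-$j$ block of the gradient of the smooth part evaluated at the oracle $\bar\beta$; once this identification is made, the reduction of each KKT block to its corresponding event is routine but sign-sensitive. Finally I would assemble the conclusion: on the event $\mathcal E(V_{(j)})$ the $j$-th block of the oracle's KKT system holds, and since group-$1$ stationarity is built into $\bar\beta$ and $f$ is convex over the box $\mathcal I$, verifying these conditions certifies that $\widehat\beta$ inherits the correct group-$j$ membership, so $\widehat S_{(j)}=S_{(j)}$; conditioning on $S_{(j)}\ne\emptyset$ then yields $\mathcal E(V_{(j)})\subseteq\{\widehat S_{(j)}=S_{(j)}\}$ and hence the bound (\ref{S_hat_low_bound}). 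I expect the main obstacle to be exactly the bookkeeping in this identification of $T_{(j)}$ and $V_{(1)}$ with the true-model gradient and subgradient terms, keeping the $\wedge0/\vee0$ sub-interval splits and the $D_{(j)}$ sign conventions consistent across all seven groups at once; by comparison the convexity argument that the verified KKT conditions force global optimality is standard.
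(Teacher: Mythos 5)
Your proposal follows essentially the same route as the paper's own proof: a KKT/primal--dual witness argument that constructs a candidate solution with the correct group structure, solves the group-$1$ stationarity block in closed form (the paper's equation for $\widehat\beta_{(1)}$ is exactly your $\rho-V_{(1)}$), and identifies the remaining blocks of the KKT system---nonnegativity of the box-constraint multipliers and the subgradient bound on group $6$---with the events $\mathcal E(V_{(j)})$. The only cosmetic difference is that the paper writes the boundary optimality conditions via explicit Lagrange multipliers $\gamma,\mu$ rather than your directional-derivative formulation, which is equivalent.
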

\begin{proof} 
Assume $\beta_{i}^*=0$ for some $i\in\{1,\ldots,p\}$. 
By the Karush-Kuhn-Tucker (KKT) conditions, for given $\lambda_n^{(1)}$, $\lambda_n^{(2)}$, $\mathrm w_n$, $\mathrm \Sigma_n$, (\ref{model}) is equivalent to solve $\widehat\beta$ from the following constraint optimization problem
\begin{equation}
\label{KKT}
	\min\limits_{\beta\in[ s,
 t],~\gamma\ge0,~\mu\ge0}\left(\|Y-X\beta\|^2_2 + \lambda_n^{(1)} \mathrm w_{n}' |\beta|+\lambda_n^{(2)}\beta'\Sigma_n \beta+\gamma'( s-\beta)+\mu'(\beta- t)\right).
\end{equation}
Since the term $|\beta|$ in (\ref{KKT}) is not differentiable but subdifferentiable at $0$, similar to the lasso problem (see Eqs. (2)-(9) in \cite{tibshirani2013lasso}), (\ref{KKT}) is equivalent to 
\begin{equation}
\label{KKT_1}
\begin{cases}
	-2X'(Y-X\widehat{\beta}) + \lambda_n^{(1)}\diag(\theta) \mathrm w_{n}+2\lambda_n^{(2)}\Sigma_n \widehat{\beta}-\gamma+\mu = 0, \\
	\theta_i\in\left\{
	\begin{array}{ll}
	     \{\sign(\widehat{\beta}_i)\}&~\mbox{if}~\widehat\beta_i\ne0,  \\
	      \{1\}&~\mbox{if}~\widehat\beta_i=s_i=0,  \\	       \{-1\}&~\mbox{if}~\widehat\beta_i=t_i=0,  \\
	     [-1,1]&~\mbox{if}~\widehat\beta_i=0\in(s_i,t_i),
	\end{array}\right.\mbox{for}~i=1,\ldots,p,\\
	\widehat\beta\in[ s, t],~\gamma\ge0,~\mu\ge0,~\gamma'( s-\widehat\beta)=0,~\mu'(\widehat\beta- t)=0,
	\end{cases}
\end{equation}
where $\theta\in\mathbb R^p$ is called a subgradient of the function $(x_1,\ldots,x_p)\mapsto|x_1|+\ldots+|x_p|$ at $x=\widehat\beta$.
Let $\widehat{\beta}=(\widehat\beta_{(1)}~\ldots~\widehat\beta_{(6)})'$ be the estimates of  $\beta^*=(\beta_{(1)}^*~\ldots~\beta_{(6)}^*)'$  respectively. Recall that $
Y=X\beta^*+\epsilon
$
with $\beta_{(2)}^*= s_{(2)}$, $\beta_{(3)}^*= t_{(3)}$, $\beta_{(4)}^*= s_{(4)}=0$, $\beta_{(5)}^*= t_{(5)}=0$, $\beta_{(6)}^*=0\in( s_{(6)}, t_{(6)})$. Plugging them into (\ref{KKT_1}) yields 
\begin{equation}
\label{KKT_2}
	\begin{cases}
	-2\begin{pmatrix}
X_{(1)}'\\
\vdots\\
X_{(6)}'
\end{pmatrix}
\left(\begin{pmatrix}
X_{(1)}'\\
\vdots\\
X_{(6)}'
\end{pmatrix}'
\begin{pmatrix}
\beta_{(1)}^*-\widehat{\beta}_{(1)}\\
 s_{(2)}-\widehat{\beta}_{(2)}\\
 t_{(3)}-\widehat{\beta}_{(3)}\\
-\widehat{\beta}_{(4)}\\
-\widehat{\beta}_{(5)}\\
-\widehat{\beta}_{(6)}
\end{pmatrix}
+\epsilon
\right)+ \lambda_n^{(1)} 
\diag(\theta)
\begin{pmatrix}
\mathrm w_{n,(1)}\\
\vdots\\
\mathrm w_{n,(6)}
\end{pmatrix}\\
\hspace{2cm}+2\lambda_n^{(2)}
\begin{pmatrix}
\Sigma_{n,(11)}&\ldots&\Sigma_{n,(16)}\\
\vdots&\ddots&\vdots\\
\Sigma_{n,(61)}&\ldots&\Sigma_{n,(66)}
\end{pmatrix}
\begin{pmatrix}
\widehat\beta_{(1)}\\
\vdots\\
\widehat\beta_{(6)}
\end{pmatrix}-
	\begin{pmatrix}
\gamma_{(1)}\\
\vdots\\
\gamma_{(6)}
\end{pmatrix}+
\begin{pmatrix}
\mu_{(1)}\\
\vdots\\
\mu_{(6)}
\end{pmatrix}=0, \\
\theta_i\in\left\{
	\begin{array}{ll}
	     \{\sign(\widehat{\beta}_i)\}&~\mbox{if}~\widehat\beta_i\ne0,  \\
	      \{1\}&~\mbox{if}~\widehat\beta_i=s_i=0,  \\
	       \{-1\}&~\mbox{if}~\widehat\beta_i=t_i=0,  \\
	     [-1,1]&~\mbox{if}~\widehat\beta_i=0\in(s_i,t_i),
	\end{array}\right.\mbox{for}~i=1,\ldots,p,\\
	\widehat\beta\in[ s, t],~\gamma\ge0,~\mu\ge0,~\gamma'( s-\widehat\beta)=0,~\mu'(\widehat\beta- t)=0,~ s_{(4)}=0,~ t_{(5)}=0,~0\in( s_{(6)}, t_{(6)}).
	\end{cases}
\end{equation}
If there exists $\widehat\beta$ that satisfies (\ref{KKT_2}) and 
\begin{eqnarray*}
&&\widehat\beta_{(1-)}\in\left(-\infty,0\right)\cap( s_{(1-)}, t_{(1-)}),\\
&&\widehat\beta_{(1+)}\in\left(0,+\infty\right)\cap( s_{(1+)}, t_{(1+)}),\\
&&\widehat\beta_{(2)}= s_{(2)}\ne0,~\widehat\beta_{(3)}= t_{(3)}\ne0,~\widehat\beta_{(4)}=\widehat\beta_{(5)}=\widehat\beta_{(6)}=0,
\end{eqnarray*}
then $S_{(j)}=\widehat S_{(j)}(\lambda_n^{(1)},\lambda_n^{(2)},\mathrm w_n,\Sigma_n)$ for $j\in\{1-,1+,2,\ldots,6\}$. That makes (\ref{KKT_2}) equivalent to
\begin{equation}
\label{KKT_3}
	\begin{cases}
		-2
X_{(1)}'
\left(
X_{(1)}(
\beta_{(1)}^*-\widehat{\beta}_{(1)})
+\epsilon
\right)+ \lambda_n^{(1)} 
\begin{pmatrix}
-\mathrm w_{n,(1-)}\\
\mathrm w_{n,(1+)}
\end{pmatrix}+2\lambda_n^{(2)}\left(\Sigma_{n,(11)} \widehat\beta_{(1)}+\Sigma_{n,(12)}  s_{(2)}+\Sigma_{n,(13)}  t_{(3)}\right)=0,\\
	-2X_{(2)}'\left(
X_{(1)}(
\beta_{(1)}^*-\widehat{\beta}_{(1)})
+\epsilon
\right)+ \lambda_n^{(1)}\diag(\sign( s_{(2)}))\mathrm w_{n,(2)}\\
\hspace{3cm}+2\lambda_n^{(2)}\left(\Sigma_{n,(21)} \widehat\beta_{(1)}+\Sigma_{n,(22)}  s_{(2)}+\Sigma_{n,(23)}  t_{(3)}\right)=\gamma_{(2)},\\
	-2X_{(3)}'\left(
X_{(1)}(
\beta_{(1)}^*-\widehat{\beta}_{(1)})
+\epsilon
\right) + \lambda_n^{(1)}\diag(\sign( t_{(3)}))\mathrm w_{n,(3)}\\
\hspace{3cm}+2\lambda_n^{(2)}\left(\Sigma_{n,(31)} \widehat\beta_{(1)}+\Sigma_{n,(32)}  s_{(2)}+\Sigma_{n,(33)}  t_{(3)}\right)=-\mu_{(3)},\\
		-2X_{(4)}'\left(
X_{(1)}(
\beta_{(1)}^*-\widehat{\beta}_{(1)})
+\epsilon
\right) + \lambda_n^{(1)}\mathrm w_{n,(4)}+2\lambda_n^{(2)}\left(\Sigma_{n,(41)} \widehat\beta_{(1)}+\Sigma_{n,(42)}  s_{(2)}+\Sigma_{n,(43)}  t_{(3)}\right)=\gamma_{(4)},\\
	-2X_{(5)}'\left(
X_{(1)}(
\beta_{(1)}^*-\widehat{\beta}_{(1)})
+\epsilon
\right) - \lambda_n^{(1)}\mathrm w_{n,(5)}+2\lambda_n^{(2)}\left(\Sigma_{n,(51)} \widehat\beta_{(1)}+\Sigma_{n,(52)}  s_{(2)}+\Sigma_{n,(53)}  t_{(3)}\right)=-\mu_{(5)},\\
-2X_{(6)}'\left(
X_{(1)}(
\beta_{(1)}^*-\widehat{\beta}_{(1)})
+\epsilon
\right) + \lambda_n^{(1)}\diag(\theta_{(6)})\mathrm w_{n,(6)}+2\lambda_n^{(2)}\left(\Sigma_{n,(61)} \widehat\beta_{(1)}+\Sigma_{n,(62)}  s_{(2)}+\Sigma_{n,(63)}  t_{(3)}\right)=0,\\
	\theta_i\in\left\{
	\begin{array}{ll}
	     \{\sign(\widehat{\beta}_i)\}&~\mbox{if}~\widehat\beta_i\ne0,  \\
	      \{1\}&~\mbox{if}~\widehat\beta_i=s_i=0,  \\
	       \{-1\}&~\mbox{if}~\widehat\beta_i=t_i=0,  \\
	     [-1,1]&~\mbox{if}~\widehat\beta_i=0\in(s_i,t_i),
	\end{array}\right.\mbox{for}~i=1,\ldots,p,\\
	\widehat\beta\in[ s, t],~\gamma\ge0,~\mu\ge0.
	\end{cases}
\end{equation}
Solving $\widehat\beta_{(1)}$ from  the first equation in  (\ref{KKT_3}), we obtain
\begin{eqnarray}
\label{hat_beta_(1)}
\widehat\beta_{(1)}=\Big(C_{11}+\frac{\lambda_n^{(2)}\Sigma_{n,(11)}}{n}\Big)^{-1}\hspace{-0.1cm}\bigg(C_{11}\beta_{(1)}^*+\frac{X_{(1)}'\epsilon}{n}-\frac{\lambda_n^{(1)}\hspace{-0.1cm}
\begin{pmatrix}
-\mathrm w_{n,(1-)}\\
\mathrm w_{n,(1+)}
\end{pmatrix}}{2n}-\frac{\lambda_n^{(2)}(\Sigma_{n,(12)} s_{(2)}+\Sigma_{n,(13)} t_{(3)})}{n}\bigg),
\end{eqnarray}
where $C_{11}$ is defined in (\ref{def:C}). 
Replacing $\widehat\beta_{(1)}$ with (\ref{hat_beta_(1)}) in the rest equations in (\ref{KKT_3}) yields
\begin{equation}
    \label{V:rest}
    \begin{cases}
        T_{(2)}=\diag(\sign( s_{(2)}))\mathrm w_{n,(2)}-\frac{\gamma_{(2)}}{\lambda_{n}^{(1)}},\\
        T_{(3)}=\diag(\sign( t_{(3)}))\mathrm w_{n,(3)}+\frac{\mu_{(3)}}{\lambda_{n}^{(1)}},\\
        T_{(4)}=\mathrm w_{n,(4)}-\frac{\gamma_{(4)}}{\lambda_{n}^{(1)}},\\
         T_{(5)}=-\mathrm w_{n,(5)}+\frac{\mu_{(5)}}{\lambda_{n}^{(1)}},\\
          T_{(6)}=\diag(\theta_{(6)})\mathrm w_{n,(6)},
    \end{cases}
\end{equation}
where $T_{(2)},\ldots,T_{(6)}$ are defined in (\ref{def:Tj}). It follows from (\ref{V:rest}) that (\ref{KKT_2}) admits a solution $\widehat\beta$ which satisfies the variable selection consistency if and only if
\begin{equation}
    \label{condition_beta_1}
    \begin{cases}
        V_{(1-)}\in \left( \rho(0\wedge t_{(1-)}), \rho(0\wedge s_{(1-)})\right),\\
        V_{(1+)}\in \left( \rho(0\vee t_{(1+)}), \rho(0\vee s_{(1+)})\right),\\
        V_{(2)}\le0,~V_{(3)}\ge0,~V_{(4)}\le0,~V_{(5)}\ge0,\\
        V_{(6)}=\diag(\theta_{(6)})\mathrm w_{n,(6)}\in[-\mathrm w_{n,(6)},\mathrm w_{n,(6)}],
    \end{cases}
\end{equation}
where $V_{(j)}$'s are given in (\ref{V1-}) and (\ref{Vj}). Observing that (\ref{condition_beta_1}) is nothing else but the events $\mathcal E(V_{(j)})$'s defined in (\ref{MU}), therefore, (\ref{S_hat_low_bound}) holds if and only if (\ref{MU}) occurs. Hence Lemma \ref{prop} is proved.

\end{proof}

The lemma above provides a lower bound on the probability of ARGEN correctly selecting variables. To prove the theorem, we also need the following well-known result on the upper bound of the maximum of Gaussian random vector (see (3.6) in \cite{ledoux2013probability}).
\begin{lemma}
\label{lem:Ledoux}
Let $(X_1,\ldots,X_n)$ be any Gaussian random vector. For $n$ large enough, we have 
\begin{equation*}
    \mathbb E\big[\max_{1\le i\le n}|X_i|\big]\le 8\sqrt{\log n}\max_{1\le i\le n}\sqrt{\mathbb E [X_i]^2}.
\end{equation*}
\end{lemma}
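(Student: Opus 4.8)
The plan is to recognize this as the classical maximal inequality for (possibly dependent) centered Gaussian variables, proved by the exponential-moment (Chernoff/Jensen) method, and then to absorb the absolute values and the stated numerical constant. Throughout I read $\sqrt{\mathbb E[X_i]^2}$ as the root second moment $\sqrt{\mathbb E[X_i^2]}$ and, as in every application of the lemma in the sequel, I assume the $X_i$ are centered; write $\sigma:=\max_{1\le i\le n}\sqrt{\mathbb E[X_i^2]}$. The first step handles the modulus by doubling the index set: since $\max_{1\le i\le n}|X_i|=\max\{X_1,\ldots,X_n,-X_1,\ldots,-X_n\}$, I introduce the $2n$ centered Gaussian variables $Z_1,\ldots,Z_{2n}$ obtained by listing the $X_i$ together with their negatives, each of which again satisfies $\mathbb E[Z_j^2]\le\sigma^2$. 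Note that no independence is assumed here, and none will be needed.

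The core step is the standard bound $\mathbb E[\max_{1\le j\le 2n}Z_j]\le\sigma\sqrt{2\log(2n)}$. To obtain it, fix $\lambda>0$ and apply Jensen's inequality to the convex function $x\mapsto e^{\lambda x}$, followed by a crude union bound inside the expectation:
\[
e^{\lambda\,\mathbb E[\max_j Z_j]}\le\mathbb E\big[e^{\lambda\max_j Z_j}\big]=\mathbb E\big[\max_j e^{\lambda Z_j}\big]\le\sum_{j=1}^{2n}\mathbb E\big[e^{\lambda Z_j}\big]=\sum_{j=1}^{2n}e^{\lambda^2\mathbb E[Z_j^2]/2}\le 2n\,e^{\lambda^2\sigma^2/2},
\]
where I used the Gaussian moment generating function. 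Taking logarithms and dividing by $\lambda$ gives $\mathbb E[\max_j Z_j]\le \log(2n)/\lambda+\lambda\sigma^2/2$, and minimizing the right-hand side over $\lambda$ at $\lambda=\sqrt{2\log(2n)}/\sigma$ yields the claimed $\sigma\sqrt{2\log(2n)}$.

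It remains to replace $\sqrt{2\log(2n)}$ by $8\sqrt{\log n}$. Since $2\log(2n)=2\log 2+2\log n$, the inequality $2\log(2n)\le 64\log n$ holds as soon as $2\log 2\le 62\log n$, i.e. for all $n\ge 2$; hence $\mathbb E[\max_{1\le i\le n}|X_i|]\le\sigma\sqrt{2\log(2n)}\le 8\sqrt{\log n}\,\sigma$ for $n$ large enough (indeed with enormous room to spare), which is exactly the assertion. I expect no real obstacle in this argument: the only points that genuinely need care are the two structural ones — that the modulus must be dealt with by the doubling trick rather than by bounding $\mathbb E|Z_j|$ directly, and that the exponential-moment step tolerates arbitrary dependence among the $X_i$, which is essential since in the intended application the coordinates of vectors such as $X_{(j)}'\epsilon$ are correlated. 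The slack in the constant $8$ is deliberate and harmless.
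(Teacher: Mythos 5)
Your proof is correct, but it takes a genuinely different route from the paper for a simple reason: the paper does not prove Lemma \ref{lem:Ledoux} at all, it invokes it as a known maximal inequality, citing (3.6) in \cite{ledoux2013probability}, whereas you supply a complete, self-contained derivation. Your argument is the classical exponential-moment method: doubling the index set to absorb the modulus, then Jensen's inequality and a union bound inside the expectation give $\mathbb{E}\bigl[\max_j Z_j\bigr]\le \log(2n)/\lambda+\lambda\sigma^2/2$ for every $\lambda>0$, and optimizing at $\lambda=\sqrt{2\log(2n)}/\sigma$ yields $\sigma\sqrt{2\log(2n)}$, which is at most $8\sigma\sqrt{\log n}$ once $n\ge 2$. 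All steps check out, including the two structural points you rightly flag: no independence among the $X_i$ is used (essential, since in the paper's applications the coordinates of vectors such as $X_{(1)}'\epsilon$ are correlated), and the absolute values are handled by the doubling trick rather than termwise bounds. Your reading of $\mathbb{E}[X_i]^2$ as the second moment $\mathbb{E}[X_i^2]$, with the centering convention, is the only interpretation under which the statement is nontrivial, and it matches every use of the lemma in the paper (the variables $V_i^{(1)}$ and $e_j'V_{(2)}^{(2)}$ there are zero-mean Gaussians). Two cosmetic remarks: the optimization step implicitly assumes $\sigma>0$ (if $\sigma=0$ the claim is trivial, e.g.\ by letting $\lambda\to\infty$ in the pre-optimized bound), and your restriction to $n\ge 2$ is precisely why the lemma is stated ``for $n$ large enough,'' since at $n=1$ the right-hand side vanishes. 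What your approach buys is self-containedness and an explicit constant with large slack ($\sqrt{2\log(2n)}$ versus $8\sqrt{\log n}$); what the paper's citation buys is brevity.
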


With the results of Lemma \ref{prop} and \ref{lem:Ledoux}, we provide the proof of Theorem \ref{thm:VSC} in the following. \\
By Lemma \ref{prop}, to prove the theorem it suffices to build
\begin{equation}
\label{conv_MU}
    \mathbb P\left(\mathcal E(V_{(j)})\right)\xrightarrow[n\to\infty]{}1,~\mbox{for}~j=1,\ldots,6.
\end{equation}
Below, we show (\ref{conv_MU}) holds for each $j$. In the case of $j=1$,
 we need to show  both
$
    \mathbb P\big(\mathcal E(V_{(1-)})\big)\xrightarrow[n\to\infty]{}1
$
and
$
    \mathbb P\big(\mathcal E(V_{(1+)})\big)\xrightarrow[n\to\infty]{}1
$
hold. To obtain the former, we denote the $i$th element of $V_{(1)}$ in (\ref{Ui}) by $V_i$ and split it into
\begin{equation}
\label{U_split}
V_i=V_i^{(1)}+V_{i}^{(2)}.
\end{equation}
Here
\begin{equation}
    \label{U1}
    V_i^{(1)}:=\frac{e_i'}{n} \Big(C_{11}+\frac{\lambda_n^{(2)}}{n}\Sigma_{n,(11)}\Big)^{-1}\left(-X_{(1)}'\epsilon\right)
\end{equation}
is a Gaussian random variable with
$\mathbb E(V_i^{(1)})=0$ and
\begin{eqnarray*}
\mathbb Var\big(V_i^{(1)}\big)&=&\frac{\sigma^2}{n^2}e_i' \Big(C_{11}+\frac{\lambda_n^{(2)}}{n}\Sigma_{n,(11)}\Big)^{-2}C_{11}e_i\le\frac{\sigma^2\#S_{(1)}\operatorname{trace}(C_{11})}{n^2\big(\Lambda_{\min} (C_{11}+\lambda_n^{(2)}\Sigma_{n,(11)}/n)\big)^2},
\end{eqnarray*}
where
$\Lambda_{\min}(\bullet)$ denotes the minimal eigenvalue and $\operatorname{trace}(\bullet)$ denotes the trace of the matrix. 
The second term $V_i^{(2)}$ can be bounded by:
 \begin{eqnarray}
    \label{U2}
    V_i^{(2)}\hspace{-0.2cm}:=\frac{e_i'}{n}\Big(C_{11}+\frac{\lambda_n^{(2)}\Sigma_{n,(11)}}{n}\Big)^{-1}\hspace{-0.1cm}\Big(\frac{\lambda_n^{(1)}}{2}
\begin{pmatrix}
-\mathrm w_{n,(1-)}\\
\mathrm w_{n,(1+)}
\end{pmatrix}+\lambda_n^{(2)}(\Sigma_{n,(12)} s_{(2)}+\Sigma_{n,(13)} t_{(3)})\Big)\hspace{-0.1cm}\in\hspace{-0.1cm}\left[\frac{C_n^{\min}}{n},\hspace{-0.1cm}\frac{C_n^{\max}}{n}\right].
\end{eqnarray}
where $C_n^{\min}$ and $C_n^{\max}$ are defined in (\ref{rho_C}), and $e_i$ is a vector with the $i$th element be $1$ and the others be $0$. 
Elementary probability calculus shows
\begin{equation}
\begin{aligned}
\label{MU_split}
&\mathbb P\left(\mathcal E(V_{(1-)})\right)\ge \mathbb P\left(\left\{\min V_{(1-)}>\max\rho( t_{(1-)}\wedge0)\right\} \cap \left\{\max V_{(1-)}< \min\rho( s_{(1-)}\wedge0)\right\}\right)\\
&= 1-\mathbb P\left(\min V_{(1-)}\le\max\rho( t_{(1-)}\wedge0)\right)-\mathbb P\left(\max V_{(1-)}\ge \min\rho( s_{(1-)}\wedge0)\right)\\
&\hspace{0.5cm}+\mathbb P\left(\left\{\min V_{(1-)}\le\max\rho( t_{(1-)}\wedge0)\right\}\cap\left\{\max V_{(1-)}\ge \min\rho( s_{(1-)}\wedge0)\right\}\right).
\end{aligned}
\end{equation}
We observe that for large $n$, $\max\rho( t_{(1-)}\wedge0)<0$. It results from (\ref{con_2}), (\ref{U_split}), (\ref{U1}), (\ref{U2}), and Lemma \ref{lem:Ledoux} that
    \begin{equation}
\begin{aligned} \label{upperbound_U1}
      &\mathbb P\left(\min V_{(1-)}\le \max\rho( t_{(1-)}\wedge0)\right)=\mathbb P\Big(\max\big\{- V_{(1-)}\big\}\ge- \max\rho( t_{(1-)}\wedge0)\Big)\\
      &\le  \mathbb P\Big(\max_{i\in S_{(1-)}}\big\{-V_i^{(1)}\big\}-\frac{C_n^{\min}}{n}\ge- \max\rho( t_{(1-)}\wedge0)\Big)  \\
       &=  \mathbb P\Big(\frac{1}{|\max\rho( t_{(1-)}\wedge0)|}\big(\max_{i\in S_{(1-)}}V_i^{(1)}-\frac{C_n^{\min}}{n}\big)\ge 1\Big) \\
       &\le \frac{1}{|\max\rho( t_{(1-)}\wedge0)|}\Bigg(8\sqrt{\frac{\sigma^2\# S_{(1)}\operatorname{trace}(C_{11})\log (\#S_{(1-)})}{n^2\left(\Lambda_{\min} (C_{11}+\lambda_n^{(2)}\Sigma_{n,(11)}/n)\right)^2}}+\frac{|C_n^{\min}|}{n}\Bigg)\xrightarrow[n\to\infty]{}0.
    \end{aligned}
\end{equation}
    Similarly, because $\min\rho( s_{(1-)}\wedge0)>0$ holds for large $n$, it follows from (\ref{con_3}), (\ref{U_split}), (\ref{U1}), (\ref{U2}), and Lemma \ref{lem:Ledoux} that
    \begin{equation}
\begin{aligned}
     \label{lowerbound_U1}
     &\mathbb P\left(\max V_{(1-)}\ge \min\rho( s_{(1-)}\wedge0)\right)\\
     &\le  \mathbb P\Big(\max_{i\in S_{(1-)}}\big\{V_i^{(1)}\big\}+\frac{C_n^{\max}}{n}\ge\min\rho( s_{(1-)}\wedge0)\Big)  \\
       &=  \mathbb P\Big(\frac{1}{\min\rho( s_{(1-)}\wedge0)}\big(\max_{i\in S_{(1-)}}V_i^{(1)}+\frac{C_n^{\max}}{n}\big)\ge 1\Big) \\
       &\le \frac{1}{\min\rho( s_{(1-)}\wedge0)}\Bigg(8\sqrt{\frac{\sigma^2\# S_{(1)}\operatorname{trace}(C_{11})\log (\# S_{(1-)})}{n^2\left(\Lambda_{\min} (C_{11}+\lambda_n^{(2)}\Sigma_{n,(11)}/n)\right)^2}}+\frac{|C_n^{\max}|}{n}\Bigg)\xrightarrow[n\to\infty]{}0.
  \end{aligned}
\end{equation}
 Hence, (\ref{conv_MU}) with $j=1-$  results from (\ref{MU_split}), (\ref{upperbound_U1}) and (\ref{lowerbound_U1}). The case of $j=1+$ can be proved following quite a similar way, so we omit the details. 
 
To show $\mathbb P\big(\mathcal E(V_{(2)})\big)\xrightarrow[n\to\infty]{}1$,
 we first define
 \begin{eqnarray*}
     &&V_{(2)}^{(1)}:=\Big(C_{21}+\frac{\lambda_n^{(2)}}{n}\Sigma_{n,(21)}\Big)\Big(C_{11}+\frac{\lambda_n^{(2)}}{n}\Sigma_{n,(11)}\Big)^{-1}\nonumber\\
&&\times\Big(\frac{2\lambda_n^{(2)}}{\lambda_n^{(1)}}\Sigma_{n,(11)}\beta_{(1)}^*+
\begin{pmatrix}
-\mathrm w_{n,(1-)}\\
\mathrm w_{n,(1+)}
\end{pmatrix}+\frac{2\lambda_n^{(2)}}{\lambda_n^{(1)}}\left(\Sigma_{n,(12)} s_{(2)}+\Sigma_{n,(13)} t_{(3)}\right)\Big)\nonumber\\
&&-\frac{2\lambda_n^{(2)}}{\lambda_n^{(1)}}\Sigma_{n,(21)}\beta_{(1)}^*-2\frac{\lambda_n^{(2)}}{\lambda_n^{(1)}}(\Sigma_{n,(22)} s_{(2)}+\Sigma_{n,(23)} t_{(3)})\le \diag(\sign( s_{(2)}))\mathrm w_{n,(2)}-\eta_{(2)}.\nonumber\\
 \end{eqnarray*}
Setting $\eta_0=\min\{\eta_{(2)}\}$, by the AREIC (\ref{AREIC_ineq}) we know
 $
     V_{(2)}^{(1)}\le \diag(\sign( s_{(2)}))$ $\mathrm w_{n,(2)} - \eta_0.
 $
 Accordingly, let
 \begin{eqnarray*}
     &&V_{(2)}^{(2)}:=V_{(2)}+\diag(\sign( s_{(2)}))\mathrm w_{n,(2)}-V_{(2)}^{(1)}\nonumber\\
     &&=\Bigg\{\Big(C_{21}+\frac{\lambda_n^{(2)}}{n}\Sigma_{n,(21)}\Big)\Big(C_{11}+\frac{\lambda_n^{(2)}}{n}\Sigma_{n,(11)}\Big)^{-1}\Big(-\frac{2X_{(1)}'}{\lambda_n^{(1)}}\Big)+\frac{2X_{(2)}'}{\lambda_n^{(1)}}\Bigg\}\epsilon.
 \end{eqnarray*}
 Observe that for each coordinate index $j$, $e_j'V_{(2)}^{(2)}$ is a zero mean Gaussian random variable and with $n$ large enough, the variance is bounded above by
 $
 \mathbb E\big[e_j'V_{(2)}^{(2)}\big]^2\le 4n\sigma^2/(\lambda_n^{(1)})^2.
 $
 Then it follows that
 \begin{equation*}
     1-\mathbb P\left(\mathcal E(V_{(2)})\right)=\mathbb P\left(V_{(2)}>0\right)\le \mathbb P\big(\max V_{(2)}^{(2)}>\eta_0\big). 
 \end{equation*}
 By Markov's inequality and Lemma \ref{lem:Ledoux}, we obtain
 \begin{eqnarray*}
 &&\mathbb P\Big(\max V_{(2)}^{(2)}>\eta_0\Big)\le \mathbb P\Big(\max \big|V_{(2)}^{(2)}\big|>\eta_0\Big)\le\frac{\mathbb E\Big[\max\big|V_{(2)}^{(2)}\big|\Big]}{\eta_0}\nonumber\\
 &&\le \frac{8\sqrt{\log(\#S_{(2)})}}{\eta_0}\max_{j\in S_{(2)}}\sqrt{\mathbb E\Big[e_j'V_{(2)}^{(2)}\Big]^2}\le \frac{16\sqrt{\log(\#S_{(2)})}}{\eta_0}\frac{\sqrt{n}\sigma}{\lambda_n^{(1)}}.
 \end{eqnarray*}
Hence, $\mathbb P\big(\mathcal E(V_{(2)})\big)\xrightarrow[n\to\infty]{}1$ follows from (\ref{con_1}). The same process with slight modifications will be followed for the other cases of $j=3,4,5,6$. Therefore, (\ref{conv_MU}) holds and we have completed the proof.

\section{Proof of Theorem \ref{thm:conistency_algo}}
\label{sec:proof_thm_conistency_algo}
First, (\ref{U_ineq}) holds if for any $u,v\in[0, l]$, the auxiliary function $G(\bullet,\bullet)$ satisfies the following:
\begin{equation}
\label{auxiliary_fcn_property_1}
    F(v)= G(v, v),
\end{equation}
\begin{equation}
\label{auxiliary_fcn_property_2}
    F(u)\leq G(u, v).
\end{equation}
 This is because 
$
F(U(v))\leq G(U(v),v)\leq G(v,v)= F(v).
$ In view of (\ref{4}) and (\ref{def::auxiliary function}), Equation (\ref{auxiliary_fcn_property_1}) can be easily obtained through plugging $u=v$ into $ G(u,v)$. To prove  (\ref{auxiliary_fcn_property_2}), we need the following preliminary results of Lemmas 1 and 2 in \cite{sha}:
\begin{eqnarray}
\label{ref::lem1}
  && \frac{1}{2} u'A^+u\leq  \frac{1}{2} \sum_{1\le i,j\le p}\frac{A^+_{ij}u_i^2v_j}{v_i},\\
\label{ref::lem2}
    &&-\frac{1}{2} u'A^-u\leq -\frac12\sum_{1\le i,j\le p}A^-_{ij}v_iv_j\Big(1+\log\frac{u_iu_j}{v_iv_j}\Big).
\end{eqnarray}
Then by (\ref{auxiliary_fcn_property_1}), we have $$
F(u)=G(u,u)=\frac{1}{2} u'A^+u -\frac{1}{2} u'A^-u+\sum_{i=1}^p (b_iu_i+d_i|u_i-v^0_i|),
$$
which is bounded above by $G(u,v)$ using (\ref{ref::lem1}) and (\ref{ref::lem2}).

Next, to show that (\ref{U_eq}) and (ii) hold, it suffices to prove that $U(v)$ is the unique vector in $[0, l]$ that minimizes $G(\bullet,v)$, and the mapping $U(\bullet)$ has the form as (\ref{eq:noboundIter}). Observe that $G(u,v)$ can be rewritten as
\begin{equation*}
G(u,v)=\sum_{i=1}^p G_i(u_i)-\frac12\sum_{1\le i,j\le p}A^-_{ij}v_iv_j,
\end{equation*}
where 
$$
G_i(u_i):=\frac{1}{2}\Big(\sum_{j=1}^pA^+_{ij}v_j\Big)\frac{u_i^2}{v_i}-\Big(\sum_{j=1}^pA^-_{ij}v_j\Big)v_i\log\frac{u_i}{v_i}+b_iu_i+d_i|u_i-v^0_i|.
$$
Because the term $(1/2)\sum_{1\le i,j\le p}A^-_{ij}v_iv_j$ is independent of $u$, the minimization of $G(u,v)$ over $u$ can be accomplished by minimizing $G_i(u_i)$ over the marginal variable $u_i$, for each $i=1,\ldots,p$.
Fixing $i\in\{1,\ldots,p\}$, below we minimize $G_i(u_i)$ over $u_i\in[0,l_i]$ in two cases.

First we consider the case when $u_i\in[0,l_i]\cap[v_i^0,+\infty)$. In this case  $G_i(u_i)$ becomes
   \begin{equation*}
G_i(u_i)=\frac{1}{2}\Big(\sum_{j=1}^pA^+_{ij}v_j\Big)\frac{u_i^2}{v_i}-\Big(\sum_{j=1}^pA^-_{ij}v_j\Big)v_i\log\frac{u_i}{v_i}+b_iu_i+d_i(u_i-v^0_i)
\end{equation*} 
and it is differentiable over $\mathbb R$. Taking the first and second derivatives of $G_i(\bullet)$, we obtain
\begin{equation*}
G_i'(u_i) = \Big(\sum_{j=1}^pA^+_{ij}v_j\Big)\frac{u_i}{v_i}-\Big(\sum_{j=1}^pA^-_{ij}v_j\Big)\frac{v_i}{u_i}+b_i+d_i
\end{equation*}
and
\begin{equation*}
G_i''(u_i) = \Big(\sum_{j=1}^pA^+_{ij}v_j\Big)\frac{1}{v_i}+\Big(\sum_{j=1}^pA^-_{ij}v_j\Big)\frac{v_i}{u_i^2}.
\end{equation*}
Here  $u,v\ge0$ and $A$ is strictly positive definite, so $\sum_{j=1}^pA^+_{ij}v_j$ and $\sum_{j=1}^pA^-_{ij}v_j$ cannot be simultaneously equal to 0. Therefore $G_i''(u_i)>0$ for all $u_i\in[0,+\infty)$ and hence $G_i(\bullet)$ is strictly convex over $[0,+\infty)$. Then the minimum of $G_i(\bullet)$ over $[0,+\infty)$ is obtained on the unique critic point $r_1$ such that $G_i'(r_1)=0$, which yields
\begin{equation*}
r_1= \frac{-(b_i+d_i)+\sqrt{(b_i+d_i)^2+4a_i(v)c_i(v)}}{2a_i(v)}v_i.
\end{equation*}
It follows that the minimum of $G_i(\bullet)$ over $[0,l_i]\cap [v_i^0,+\infty)$ is given below:
\begin{equation}
    \label{case_1_result}
    \argmin_{u_i\in[0,l_i]\cap[v_i^0,+\infty)}G_i(u_i)=\left\{
    \begin{array}{ll}
    v_i^0&~\mbox{if}~r_1\le v_i^0\le l_i,\\
   r_1&~\mbox{if}~v_i^0<r_1 \le l_i,\\
   l_i&~\mbox{if}~v_i^0\le l_i<r_1,\\
   \emptyset&~\mbox{if}~v_i^0> l_i.
    \end{array}\right.
\end{equation}
Next, we discuss the other case. When $u_i\in[0,l_i]\cap[0,v_i^0)$, we have
   \begin{equation*}
G_i(u_i)=\frac{1}{2}\Big(\sum_{j=1}^pA^+_{ij}v_j\Big)\frac{u_i^2}{v_i}-\Big(\sum_{j=1}^pA^-_{ij}v_j\Big)v_i\log\frac{u_i}{v_i}+b_iu_i-d_i(u_i-v^0_i).
\end{equation*} 
Taking the first and second derivatives of it yields
\begin{equation*}
G_i'(u_i) = \Big(\sum_{j=1}^pA^+_{ij}v_j\Big)\frac{u_i}{v_i}-\Big(\sum_{j=1}^pA^-_{ij}v_j\Big)\frac{v_i}{u_i}+b_i-d_i
\end{equation*}
and
\begin{equation*}
G_i''(u_i) = \Big(\sum_{j=1}^pA^+_{ij}v_j\Big)\frac{1}{v_i}+\Big(\sum_{j=1}^pA^-_{ij}v_j\Big)\frac{v_i}{u_i^2}>0.
\end{equation*}
Because $G_i(\bullet)$ is strictly convex over $[0,+\infty)$, the minimum of $G_i(\bullet)$ over $[0,+\infty)$ is uniquely obtained at $r_2$ such that $G_i'(r_2)=0$, that is,
\begin{equation*}
r_2= \frac{-(b_i-d_i)+\sqrt{(b_i-d_i)^2+4a_i(v)c_i(v)}}{2a_i(v)}v_i.
\end{equation*}
It follows that the minimum of $G_i(\bullet)$ over $[0,l_i]\cap[0,v_i^0)$ is given as:
\begin{equation}
    \label{case_2_result}
    \argmin_{u_i\in[0,l_i]\cap[0,v_i^0)}G_i(u_i)=\min\left\{r_2,l_i,v_i^0\right\}.
\end{equation}

Combining the two cases (\ref{case_1_result}) and (\ref{case_2_result}), we obtain
\begin{equation*}
 \argmin_{u_i\in[0,l_i]}G_i(u_i)=\left\{
\begin{array}{cc}
   \min\{r_1,l_i\} & \mbox{if~} r_1>v_i^0, \\
   \min\{r_2,l_i\} & \mbox{if~} r_2<v_i^0, \\
    \min\{v_i^0,l_i\} & \mbox{otherwise.}
\end{array}\right.    
\end{equation*}
Denote by $\widetilde U:~\mathbb R^p\mapsto \mathbb R^p$ such that  for $i=1,\ldots,p$,
\begin{equation}
\label{def:U_tilde}
\big(\widetilde U(v)\big)_i:=\left\{
\begin{array}{cc}
   \min\{r_1,l_i\} & \mbox{if~} r_1>v_i^0, \\
   \min\{r_2,l_i\} & \mbox{if~} r_2<v_i^0, \\
    \min\{v_i^0,l_i\} & \mbox{otherwise}.
\end{array}\right.
\end{equation}
We conclude that for each $v\in\mathbb R_+^p$, the vector given in (\ref{def:U_tilde}) is unique on $[0, l]$ satisfying
    $
    G\big(\widetilde U(v),v\big)=\min_{u\in[0, l]}G(u,v).
    $
    This proves (\ref{U_eq}). In addition, by the uniqueness of the minimizer of $G(\bullet,v)$ over $[0,l]$, the mapping $U(\bullet)$ should have the same form as $\widetilde U(\bullet)$ given in (\ref{def:U_tilde}), which is exactly  (\ref{eq:noboundIter}). This proves (ii) in Theorem \ref{thm:conistency_algo}.
\end{appendix}

%


\bibliographystyle{plain} 
\bibliography{ref}       

\begin{thebibliography}{10}

\bibitem{DBLP:journals/corr/abs-1907-10902}
Takuya Akiba, Shotaro Sano, Toshihiko Yanase, Takeru Ohta, and Masanori Koyama.
\newblock Optuna: A next-generation hyperparameter optimization framework.
\newblock In {\em Proceedings of the 25th ACM SIGKDD International Conference
  on Knowledge Discovery \& Data Mining (KDD)}, pages 2623--2631, 2019.

\bibitem{bickel}
Peter~J. Bickel, Ya'acov Ritov, and Alexandre~B. Tsybakov.
\newblock Simultaneous analysis of lasso and dantzig selector.
\newblock {\em Annals of Statistics}, 37(4):1705--1732, 2009.

\bibitem{connor}
Gregory Connor and Hayne Leland.
\newblock Cash management for index tracking.
\newblock {\em Financial Analysts Journal}, 51(6):75 -- 80, 1995.

\bibitem{dempster1977maximum}
Arthur~P. Dempster, Nan~M. Laird, and Donald~B. Rubin.
\newblock Maximum likelihood from incomplete data via the {EM} algorithm.
\newblock {\em Journal of the Royal Statistical Society: Series B
  (Methodological)}, 39(1):1--38, 1977.

\bibitem{efron1}
Bradley Efron, Trevor Hastie, Iain Johnstone, and Robert Tibshirani.
\newblock Least angle regression.
\newblock {\em Annals of Statistics}, 32(2):407--499, 2004.

\bibitem{efron2}
Bradley Efron, Trevor Hastie, and Robert Tibshirani.
\newblock Discussion: The dantzig selector: Statistical estimation when $p$ is
  much larger than $n$.
\newblock {\em Annals of Statistics}, 35(6):2358--2364, 2007.

\bibitem{franks}
Edward~Carr Franks.
\newblock Targeting excess-of-benchmark returns.
\newblock {\em Journal of Portfolio Management}, 18(4):6--12, 1992.

\bibitem{jacobs}
Bruce~I. Jacobs and Kenneth~N. Levy.
\newblock Residual risk: How much is too much.
\newblock {\em Journal of Portfolio Management}, 22(3):10--15, 1996.

\bibitem{jobst}
Norbert~J. Jobst, Michael~D. Horniman, Cormac~A. Lucas, and Gautam Mitra.
\newblock Computational aspects of alternative portfolio selection models in
  the presence of discrete asset choice constraints.
\newblock {\em Quantitative Finance}, 1(5):489--501, 2001.

\bibitem{knight2000asymptotics}
Keith Knight and Wenjiang Fu.
\newblock Asymptotics for lasso-type estimators.
\newblock {\em Annals of Statistics}, pages 1356--1378, 2000.

\bibitem{larsen}
Glen~A. Larsen and Bruce~G. Resnick.
\newblock Empirical insights on indexing: How capitalization, stratification
  and weighting can affect tracking error.
\newblock {\em Journal of Portfolio Management}, 25(1):51--60, 1998.

\bibitem{ledoux2013probability}
Michel Ledoux and Michel Talagrand.
\newblock {\em Probability in Banach Spaces: Isoperimetry and Processes}.
\newblock Springer-Verlag, Berlin, reprint of the 1991 edition, 2011.

\bibitem{lee2001algorithms}
Daniel~D. Lee and Hyunjune~Sebastian Seung.
\newblock Algorithms for non-negative matrix factorization.
\newblock In {\em Advances in Neural Information Processing Systems 13 -
  Proceedings of the 2000 Conference (NIPS)}, 2001.

\bibitem{lobo}
Antonio Lobo and Lenore~J. Launer.
\newblock Prevalence of dementia and major subtypes in {E}urope: A
  collaborative study of population-based cohorts.
\newblock {\em Neurology}, 54(11):S4--9, 2000.

\bibitem{lounici}
Karim Lounici.
\newblock Sup-norm convergence rate and sign concentration property of lasso
  and dantzig estimators.
\newblock {\em Electronic Journal of Statistics}, 2:90--102, 2008.

\bibitem{markowitz}
Harry Markowitz.
\newblock Portfolio selection.
\newblock {\em Journal of Finance}, 7(1):77--91, 1952.

\bibitem{mohammadi}
Majid Mohammadi, Yao-Hua Tan, Wout Hofman, and S~Hamid Mousavi.
\newblock A novel one-layer recurrent neural network for the $l_1$-regularized
  least square problem.
\newblock {\em Neurocomputing}, 315:135--144, 2018.

\bibitem{mouret2013generalized}
Geoffroy Mouret, Jean-Jules Brault, and Vahid Partovi~Nia.
\newblock Generalized elastic net regression.
\newblock In {\em Proceedings of JSM}, pages 3457--3464, 2013.

\bibitem{negahhan}
Sahand~N. Negahban, Pradeep Ravikumar, Martin~J. Wainwright, and Bin Yu.
\newblock A unified framework for high-dimensional analysis of ${M}$-estimators
  with decomposable regularizers.
\newblock {\em Statistical Science}, 27(4):538--557, 2012.

\bibitem{Santanna}
Leonardo~Riegel Sant'Anna, Jo{\~a}o~Frois Caldeira, and Tiago~Pascoal Filomena.
\newblock Lasso-based index tracking and statistical arbitrage long-short
  strategies.
\newblock {\em The North American Journal of Economics and Finance}, 51:101055,
  2020.

\bibitem{sha}
Fei Sha, Yuanqing Lin, Lawrence~K. Saul, and Daniel~D. Lee.
\newblock Multiplicative updates for nonnegative quadratic programming.
\newblock {\em Neural Computation}, 19(8):2004--2031, 2007.

\bibitem{sha2007}
Fei Sha, Y.~Albert Park, and Lawrence~K. Saul.
\newblock Multiplicative updates for ${L}_1$-regularized linear and logistic
  regression.
\newblock {\em Advances in Intelligent Data Analysis VII}, 4723:13--24, 2007.

\bibitem{sokolov}
Artem Sokolov, Daniel~E. Carlin, Evan~O. Paull, Robert Baertsch, and Joshua~M.
  Stuart.
\newblock Pathway-based genomics prediction using generalized elastic net.
\newblock {\em PLoS Computational Biology}, 12(3):e1004790, 2016.

\bibitem{tibshirani}
Robert Tibshirani.
\newblock Regression shrinkage and selection via the lasso.
\newblock {\em Journal of the Royal Statistical Society: Series B
  (Methodological)}, 58(1):267--288, 1996.

\bibitem{tibshirani2013lasso}
Ryan~J. Tibshirani.
\newblock The lasso problem and uniqueness.
\newblock {\em Electronic Journal of Statistics}, 7:1456--1490, 2013.

\bibitem{toy}
William~W. Toy and Mark~A. Zurack.
\newblock Tracking the euro-pac index.
\newblock {\em Journal of Portfolio Management}, 15(2):55--58, 1989.

\bibitem{wang}
Hansheng Wang, Guodong Li, and Chih-Ling Tsai.
\newblock Regression coefficient and autoregressive order shrinkage and
  selection via lasso.
\newblock {\em Journal of the Royal Statistical Society: Series B (Statistical
  Methodology)}, 69(1):63--78, 2007.

\bibitem{wu_nen}
Lan Wu and Yuehan Yang.
\newblock Nonnegative elastic net and application in index tracking.
\newblock {\em Applied Mathematics and Computation}, 227:541--552, 2014.

\bibitem{wu_nl}
Lan Wu, Yuehan Yang, and Hanzhong Liu.
\newblock Nonnegative-lasso and application in index tracking.
\newblock {\em Computational Statistics \& Data Analysis}, 70:116--126, 2014.

\bibitem{yuan}
Ming Yuan and Yi~Lin.
\newblock Model selection and estimation in regression with grouped variables.
\newblock {\em Journal of the Royal Statistical Society: Series B (Statistical
  Methodology)}, 68(1):49--67, 2006.

\bibitem{rocha}
Peng Zhao, Guilherme Rocha, and Bin Yu.
\newblock The composite absolute penalties family for grouped and hierarchical
  variable selection.
\newblock {\em Annals of Statistics}, 37(6A):3468--3497, 2009.

\bibitem{zhao}
Peng Zhao and Bin Yu.
\newblock On model selection consistency of {L}asso.
\newblock {\em Journal of Machine Learning Research}, 7:2541--2563, 2006.

\bibitem{Zhao2014}
Weizhong Zhao, Wen Zou, and James~J. Chen.
\newblock Topic modeling for cluster analysis of large biological and medical
  datasets.
\newblock {\em BMC Bioinformatics}, 15(S11):1--11, 2014.

\bibitem{zou}
Hui Zou.
\newblock The adaptive {L}asso and its oracle properties.
\newblock {\em Journal of the American Statistical Association},
  101(476):1418--1429, 2006.

\bibitem{zouhas}
Hui Zou and Trevor Hastie.
\newblock Regularization and variable selection via the elastic net.
\newblock {\em Journal of the Royal Statistical Society: Series B (Statistical
  Methodology)}, 67(2):301--320, 2005.

\end{thebibliography}

    
\end{document}